\newcommand{\NSGA}{\mbox{NSGA}\nobreakdash-II\xspace}
\newcommand{\moead}{MO\-EA/D\xspace}
\newcommand{\oneminmax}{\textsc{OneMinMax}\xspace}
\newcommand{\omm}{\mathrm{OMM}}
\newcommand{\lotz}{\mathrm{LOTZ}}
\newcommand{\R}{\ensuremath{\mathbb{R}}}
\newcommand{\N}{\ensuremath{\mathbb{N}}} % ohne Null!!!
\DeclareMathOperator{\E}{E} %use with [...]
\DeclareMathOperator{\Var}{Var}
\newcommand{\eulerE}{\mathrm{e}}
\let\originalleft\left
\let\originalright\right
\renewcommand{\left}{\mathopen{}\mathclose\bgroup\originalleft}
\renewcommand{\right}{\aftergroup\egroup\originalright}
\begin{document}

\title{Proven Runtime Guarantees for How the \moead Computes the Pareto Front From the Subproblem Solutions}
\titlerunning{\moead: Computing the Pareto Front from the Subproblem Solutions}
%\title{Exponential Speed-Up of the \moead on OneMinMax When Using Power-Law Mutation}
%
%\titlerunning{Abbreviated paper title}
% If the paper title is too long for the running head, you can set
% an abbreviated paper title here
%
\author{Benjamin Doerr\inst{1}\orcidID{0000-0002-9786-220X} \and
Martin~S. Krejca\inst{1}\orcidID{0000-0002-1765-1219} \and
Noé Weeks\inst{2} %\orcidID{2222-3333-4444-5555}
}
%
%\authorrunning{Benjamin.~Doerr et al.}
% First names are abbreviated in the running head.
% If there are more than two authors, 'et al.' is used.
%
\institute{Laboratoire d'Informatique (LIX), CNRS, \'Ecole Polytechnique, Institut Polytechnique de Paris, Palaiseau, France\and
École Normale Supérieure}

\maketitle              % typeset the header of the contribution
\begin{abstract}
% The abstract should briefly summarize the contents of the paper in
% 150--250 words.

The decomposition-based multi-objective evolutionary algorithm (\moead) does not directly optimize a given multi-objective function~$f$, but instead optimizes $N + 1$ single-objective subproblems of~$f$ in a co-evolutionary manner. It maintains an archive of all non-dominated solutions found and outputs it as approximation to the Pareto front.
Once the \moead found all optima of the subproblems (the $g$-optima), it may still miss Pareto optima of~$f$.
The algorithm is then tasked to find the remaining Pareto optima directly by mutating the $g$-optima.

In this work, we analyze for the first time how the \moead with only standard mutation operators computes the whole Pareto front of the \oneminmax benchmark when the $g$-optima are a strict subset of the Pareto front. For standard bit mutation, we prove an expected runtime of $O(n N \log n + n^{n/(2N)} N \log n)$ function evaluations.
Especially for the second, more interesting phase when the algorithm start with all $g$-optima, we prove an $\Omega(n^{(1/2)(n/N + 1)} \sqrt{N} 2^{-n/N})$ expected runtime.
This runtime is super-polynomial if $N = o(n)$, since this leaves large gaps between the $g$-optima, which require costly mutations to cover.

For power-law mutation with exponent $\beta \in (1, 2)$, we prove an expected runtime of $O\left(n N \log n + n^{\beta} \log n\right)$ function evaluations.
The $O\left(n^{\beta} \log n\right)$ term stems from the second phase of starting with all $g$-optima, and it is independent of the number of subproblems~$N$.
This leads to a huge speedup compared to the lower bound for standard bit mutation.
In general, our overall bound for power-law suggests that the \moead performs best for $N = O(n^{\beta - 1})$, resulting in an $O(n^\beta \log n)$ bound.
In contrast to standard bit mutation, smaller values of~$N$ are better for power-law mutation, as it is capable of easily creating missing solutions.

\keywords{MOEA/D~\and multi-objective optimization~\and runtime analysis~\and power-law mutation.}
\end{abstract}

% We studied the decomposition-based evolutionary algorithm for multi-objective optimization (\moead)~\cite{ZhangL07} on the classic bi-objective \oneminmax problem~\cite{GielL10} of problem size~$n$.
% Instead of directly optimizing an objective function~$f$, the \moead optimizes $N + 1$ single-objective problems, each based on~$f$.
% We investigated the impact of choosing~$N$ smaller than the size of the Pareto front of~$\omm$, which is $n + 1$.
% We considered an even spread of the subproblems of the \moead, and we analyzed the algorithm's runtime both theoretically and empirically.

\section{Introduction}

Many real-world problems require the simultaneous optimization of different objectives.
In this setting, known as \emph{multi-objective optimization}, different solutions may not be comparable based on their objective values, as one solution can win over another solution in one objective, but lose in another objective.
This results in a set of incomparable optimal objective values, commonly referred to as \emph{Pareto front}.
The aim in multi-objective optimization is to find the Pareto front of a problem, or a good approximation thereof.

Due to their population-based and heuristic nature, evolutionary algorithms lend themselves very well to multi-objective optimization, and they have been successfully applied for decades to a plethora of hard multi-objective optimization problems~\cite{ZhouQLZSZ11}.
This strong interest has led to a variety of algorithms~\cite{ZhangX17MOSurvey,ZhouQLZSZ11}, following different paradigms.

From the early days of theoretical analyses of evolutionary algorithms on, multi-objective evolutionary algorithms have been analyzed also via theoretical means~\cite{Rudolph98ep,LaumannsTZ04,Giel03,Thierens03}. This area saw a boost of activity in the last two years, when the first mathematical runtime analysis of the \NSGA~\cite{ZhengLD22} inspired many deep analyses of this important algorithm and variants such as the NSGA-III or SMS-EMOA \cite{ZhengD22gecco,BianQ22,DoerrQ23tec,DangOSS23aaai,DoerrQ23LB,DoerrQ23crossover,DangOSS23gecco,CerfDHKW23,WiethegerD23,ZhengD23aij,ZhengD23many,ZhengD24,ZhengLDD24,OprisDNS24}. %\todo{Probably overkill? B: Viel hilft viel ;-)}
%Most of these works study the non-dominated sorting genetic algorithm II (NSGA-II)~\cite{DebJ14}, which is the most popular multi-objective evolutionary algorithm. It follows the paradigm of solving a problem directly by applying operators specialized for the multi-objective domain, such as selecting favorable solutions among sets that contain potentially incomparable solutions.

A substantially different, yet also very important algorithm is the \emph{decompo\-sition-based multi-objective evolutionary algorithm} (\moead)~\cite{ZhangL07}. It decomposes a multi-objective optimization problem~$f$ into various single-objective subproblems of~$f$.
These subproblems are optimized in parallel.
While doing so, the non-dominated solutions for~$f$ are maintained in an archive.

Despite its popularity in empirical research and good performance in real-world problems~\cite{ZhangX17MOSurvey,ZhouQLZSZ11}, the \moead has not been extensively studied theoretically~\cite{LiZZZ16,HuangZCHLX21,HuangZ20,DoNNS23MOEADforCombinatorics}. In particular, it is currently not known how the basic \moead using only standard mutation operators as variation operators finds Pareto optima that are not already an optimum of one of the subproblems (we refer to \Cref{sec:related-work} for a detailed discussion of the previous works). We recall that the \moead solves a number of single-objective subproblems essentially via single-objective approaches. Hence, if all Pareto optima of the original problem appear as optima of subproblems (we call these \emph{$g$-optima} in the remainder), this is the setting regarded, e.g., in~\cite{LiZZZ16}, then the multi-objective aspect of the problem vanishes and the only question is how efficiently the single-objective approaches solve the subproblems.

% Especially, it is currently not well understood how strongly the algorithm's runtime for standard mutation operators is affected by the number of subproblems to consider.
% More specifically, if the \moead optimizes only a few subproblems and found their solutions (which we call \emph{$g$-optima}), it is unlikely that the $g$-optima cover the Pareto front of the multi-objective problem sufficiently well.
% The algorithm then requires to find the remaining Pareto-optimal solutions by immediately turning $g$-optima into other Pareto-optimal solutions, as the $g$-optima will not be replaced, due to their optimality.
% In this work, we study mathematically how the performance of the \moead is affected by the choice of the number~$N$ of subproblems to consider.

% We review the limited related work in more detail in .

\subsubsection*{Our Contribution.}
Naturally, in a typical application of the \moead, one cannot assume that the subproblems are sufficiently numerous and evenly distributed so that each Pareto optimum appears as $g$-optimum. To better understand how the \moead copes with such situations, we study in this work mathematically how the \moead computes the full Pareto front when started with a population consisting of all $g$-optima. In this first work on this topic, as often in the mathematical runtime analysis, we consider the basic \oneminmax ($\omm$) benchmark~\cite{GielL10}. As in most previous works, we assume that the $N+1$ subproblems are constructed in a manner that the corresponding $g$-optima are spread equidistantly, with respect to the Hamming distance, across the Pareto front. We regard the basic \moead using standard bit mutation as only variation operator.
Interestingly, this is the first theoretical analysis of the \moead in this setting (apart from the case that the $g$-optimal cover the Pareto front~\cite{LiZZZ16}). Hence our results (\Cref{thm:second-phase-standard-bit}), together with standard estimates on this time to compute the $g$-optima (\Cref{thm:first-phase}), also give the first estimates on the full runtime of the \moead in this setting (\Cref{thm:overall-run-time-standard-bit}).

Since our results show that the main bottleneck for computing points on the Pareto front that are far from all $g$-optima is that standard bit mutation rarely flips many bits (due to the concentration behavior of the binomial distribution), we also resort to the heavy-tailed mutation operator proposed by Doerr et~al.~\cite{DoerrLMN17}. Interestingly, this allows for drastic speed-ups when considering the phase of the optimization that starts with all $g$-optima (\Cref{thm:phase-two}).

In detail, our various results provide us with expected-runtime estimates for the \moead with either mutation operator to optimize~$\omm$ (\Cref{thm:overall-run-time-standard-bit,thm:overall-run-time-power-law}).
These results prove, respectively, an expected number of $O(n N \log n + n^{n/(2N)} N \log n)$ function evaluations for the \moead with standard bit mutation, and $O\left(n N \log n + n^{\beta} \log n\right)$ expected function evaluations for power-law mutation with power-law exponent $\beta \in (1, 2)$.
In both results, the second term refers to the interesting phase where the algorithm is initialized with all $g$-optima and is tasked to find the remaining Pareto optima of~$\omm$.

Our overall bound for standard bit mutation yields $O(n^2 \log n)$ in the best case of $N = n$, matching the result by  Li et~al.~\cite[Proposition~$4$]{LiZZZ16}.
For general~$N$, the second term in our bound suggests that the \moead performs best if $N \in [\frac{n}{2} .. n]$ and that the runtime is super-polynomial once $N = o(n)$.
Moreover, we prove a lower bound of $\Omega(n^{(1/2)(n/N + 1)} \sqrt{N} 2^{-n/N})$ for this second term (\Cref{thm:second-phase-standard-bit}), which supports this runtime characterization.
However, this lower bound is not necessarily applicable to the \emph{entire} optimization of the \moead on~$\omm$, as we only prove it for a certain phase of the optimization process.
Nonetheless, we believe that it is true for sufficiently large~$n$.
We go more into detail about this behavior in \Cref{sec:theory} and especially in \Cref{sec:theory:first-phase}.
Overall, our bounds suggest that standard bit mutation performs better when~$N$ is large.

Our upper bound for power-law mutation is best once $N = O(n^{\beta - 1})$, resulting in a runtime bound of $O(n^\beta \log n)$.
The bound $O(n^\beta \log n)$ stems from the second phase of the optimization, where the algorithm is initialized with all $g$-optima (\Cref{thm:phase-two}).
It is noteworthy that this bound does not depend on~$N$, as, roughly, the time to fill in the gaps between $g$-optima is inversely proportional to the cost of performing $N + 1$ function evaluations each iterations.
Hence, the parameter~$N$ only influences our bound for the first phase of finding all $g$-optima.
Overall, this suggests that the \moead performs better when~$N$ is small, which is opposite of the behavior with standard bit mutation.
Moreover, the bound for power-law mutation is $O(n^2 \log n)$ in the worst case, matching the best case of our bound for standard bit mutation.
This suggests that power-law mutation is more preferable than standard bit mutation in our setting, as power-law mutation exhibits a far more robust runtime behavior.

Last, for each of the two phases we consider, we get independent results that hold for a larger class of algorithms (\Cref{lem:reference-point-optimization,lem:finding-all-g-optima}) or functions (\Cref{thm:second-phase-standard-bit,thm:phase-two}).
Moreover, we prove an anti-concentration bound for the hypergeometric distribution close around its expected value (\Cref{lem:hypergeo}).
Due to the more general setting, we believe that these results are of independent interest.

\section{Related Work}
\label{sec:related-work}
Theoretical analyses of the \moead so far are scarce.
Most results do not consider the \moead with only standard mutation operators.
And those that do make simplifying assumptions about the decomposition, using problem-specific knowledge.
We also note that we could not find any proofs for how the \moead finds its reference point (one of its parameters), which is important in the general scenarios.
If the reference point is mentioned, it is immediately assumed to be best-possible.

Li et~al.~\cite{LiZZZ16} conducted the first mathematical analysis of the \moead.
They study the runtime of the algorithm on the classic bi-objective \oneminmax ($\omm$)~\cite{GielL10} and \textsc{LeadingOnesTrailingZeros} ($\lotz$)\footnote{We note that the authors call~$\omm$ COCZ, and that they consider a version of~$\lotz$, called \textsc{LPTNO}, that considers strings over~$\{-1, 1\}$ instead of binary strings, effectively replacing~$0$s with~$-1$s. This changes some values for certain parameters but effectively results in the same bounds, to the best of our knowledge. Hence, we use the names~$\omm$ and~$\lotz$ throughout this section.} benchmarks of size~$n$, that is, the number of objective-function evaluations until the \moead finds all Pareto optima of the problem.
Both benchmarks have a Pareto front of size $n + 1$.
The authors consider a decomposition of each problem into $n + 1$ subproblems---matching the size of the Pareto front---, and they assume that the \moead uses standard bit mutation.
The authors prove that if the subproblems are chosen such that the $n + 1$ $g$-optima correspond one-to-one to the Pareto front of the problem, then the \moead optimizes~$\omm$ in $O(n^2 \log n)$ expected function evaluations, and~$\lotz$ in $O(n^3)$.
We note that this requires a different subproblem structure for~$\omm$ and~$\lotz$.
Moreover, the authors show that if the \moead uses the commonly used subproblem structure of~$\omm$ for~$\lotz$, then the $g$-optima do not cover the entire Pareto front of~$\lotz$, for sufficiently large~$n$.
In this case, the authors argue that optimization takes a long time, as the missing solutions need to be found, but these arguments are not made formal.

Huang and Zhou~\cite{HuangZ20} analyze the \moead when using contiguous hypermutations, a non-standard mutation operator stemming from artificial immune systems.
The authors study two versions of contiguous hypermutation, and they consider the~$\omm$ and the~$\lotz$ benchmarks as well as a deceptive bi-objective problem and one containing a plateau.
Moreover, the authors consider a general decomposition of each problem into $N + 1$ subproblems.
This decomposition is always the same and corresponds to the commonly used one, which results in evenly spread $g$-optima on the Pareto front of~$\omm$, also analyzed by Li et~al.~\cite{LiZZZ16} above.
Huang and Zhou~\cite{HuangZ20} prove that both \moead variants optimize each of the four benchmarks in $O(N n^2 \log n)$ expected function evaluations.
Moreover, they prove the same runtime bounds for the $4$-objective versions of~$\omm$ and~$\lotz$.
Overall, their results suggest that a choice of $N = O(1)$ is more beneficial, as the hypermutation operators are capable to cover larger distances than standard bit mutation and can thus find Pareto optima efficiently that do not correspond to $g$-optima.

Huang et~al.~\cite{HuangZCHLX21} analyze a variant of the \moead that employs standard bit mutation as well as one-point crossover at a rate of~$0.5$.
When performing crossover, one of the parents is the best solution of the current subproblem~$g$ and the other one is chosen uniformly at random among the non-dominated solutions of the subproblems closest to~$g$ with respect to the Euclidean distance in one of their parameters.
The problem decomposition is always the commonly used one with equidistant $g$-optima, as in the two papers discussed above.
The authors consider essentially the same four\footnote{The authors actually treat~$\lotz$ and \textsc{LPTNO} as two functions, but the results are the same, which is why we count it as one problem.} problems as Huang and Zhou~\cite{HuangZ20} above, and they also consider a general number of $N + 1$ subproblems.
For~$\lotz$ and the plateau function, the authors proved an expected runtime of $O(N n^2)$ function evaluations, and an $O(N n \log n)$ bound for~$\omm$ and the deceptive function.

Very recently, Do et~al.~\cite{DoNNS23MOEADforCombinatorics} analyzed the \moead on the multi-objective minimum-weight-base problem, which is an abstraction of classical NP-hard combinatorial problems.
Their \moead variant uses a decomposition based on weight scalarization, different from the previous works above.
The authors then prove that this variant finds an approximation of the Pareto front of the problem within expected fixed-parameter polynomial time.

\section{Preliminaries}
We denote the natural numbers by~$\N$, including~$0$, and the real numbers by~$\R$.
For $a, b\in \R$, let $[a..b] = [a, b]\cap \N$ and $[a] = [1..a]$.

Let $n \in \N_{\geq 1}$.
We consider \emph{bi-objective} optimization problems, that is, functions $f\colon \{0, 1\}^n \to \R^2$.
We always assume that the dimension $n \in \N_{\geq 1}$ is given implicitly.
When using big-O notation, it refers to asymptotics in this~$n$.
In this sense, an event occurs \emph{with high probability} if its probability is at least $1 - o(1)$.

We call a point $x\in\{0, 1\}^n$ an \emph{individual} and $f(x)$ the \emph{objective value} of $x$.
For all $i \in [n]$ and $j \in [2]$, we let $x_i$ denote the $i$-th component of $x$ and $f_j(x)$ the $j$-th component of $f(x)$.
Moreover, let $|x|_0$ denote the number of~$0$s of~$x$, and let $|x|_1$ denote its number of~$1$s.

For all $u, v \in \R^2$, we say that~$v$ \emph{weakly dominates} $u$ (written $v \succeq u$) if and only if for all $i \in [2]$ holds that $f_i(v) \ge f_i(u)$.
We say that~$v$ \emph{strictly dominates} $u$ if and only if one of these inequalities is strict.
We extend this notation to individuals, where a dominance holds if and only if it holds for their respective objective values.

We consider the maximization of bi-objective functions~$f$, that is, we are interested in $\succeq$-maximal elements, called \emph{Pareto-optimal} individuals.
The set of all objective values that are not strictly dominated, that is, the set $F^* \coloneqq \{v \in \R^2 \mid \not\exists x\in \{0, 1\}^n, f(x) \succ v\}$, is called the \emph{Pareto front} of~$f$.

\subsubsection*{\oneminmax}
We analyze the \oneminmax ($\omm$) benchmark~\cite{GielL10} problem, which returns for each individual the number of~$0$s as the first objective, and the number of~$1$s as the second objective.
Formally, $\omm\colon x \mapsto (|x|_0, |x|_1)$.

Note that each individual is Pareto optimal. The Pareto front of \oneminmax is $\{(i, n-i), i \in [0..n]\}$.

\subsubsection*{Mathematical Tools}

We use the well-known multiplicative drift theorem~\cite{DoerrJW12algo} with tail bounds~\cite{DoerrG13algo}.
We present the theorem in a fashion that is sufficient for our purposes.
Throughout this article, if we write for a stopping time~$T$ and two unary formulas~$P$ and~$Q$ that for all $t \in \N$ with $t < T$ holds that $P(t) \geq Q(t)$, then we mean that for all $t \in \N$ holds that $P(t) \cdot \mathds{1}\{t < T\} \geq Q(t) \cdot \mathds{1}\{t < T\}$, where~$\mathds{1}$ denotes the indicator function.

\begin{theorem}[Multiplicative drift~{\cite{DoerrJW12algo}}, upper tail bound~{\cite{DoerrG13algo}\cite[Theorem~$2.4.5$]{Lengler17}}]
    \label{thm:multiplicative-drift}
    Let $n \in \N$, let $(X_t)_{t \in \N}$ be a random process over $[0 .. n]$, and let $T = \inf \{t \in \N \mid X_t = 0\}$.
    Moreover, assume that there is a $\delta \in \R_{> 0}$ such that for all $t \in \N$ with $t < T$ holds that $\E[X_t - X_{t + 1} \mid X_t] \geq \delta X_t$.
    Then, $\E[T] \leq \frac{1}{\delta} \ln n$, and for all $r \in \R_{\geq 0}$ holds that $\Pr[T > \frac{1}{\delta}(r + \ln n)] \leq \eulerE^{-r}$.
\end{theorem}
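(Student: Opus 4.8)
The plan is to derive both statements from the single one-step consequence of the drift hypothesis, namely $\E[X_{t+1} \mid X_t] \le (1 - \delta) X_t$ whenever $t < T$. To avoid carrying the restriction ``$t < T$'' through the computation, I would first pass to the stopped process $\tilde{X}_t \coloneqq X_{\min(t, T)}$. On the event $\{t \ge T\}$ we have $\tilde{X}_t = \tilde{X}_{t+1} = 0$, so the contraction $\E[\tilde{X}_{t+1} \mid \tilde{X}_t] \le (1 - \delta)\tilde{X}_t$ now holds for every $t \in \N$ without any side condition. Iterating it with the law of total expectation gives $\E[\tilde{X}_t] \le (1 - \delta)^t \E[\tilde{X}_0] = (1 - \delta)^t X_0 \le (1 - \delta)^t n \le n\,\eulerE^{-\delta t}$, using $X_0 \le n$ and $1 - \delta \le \eulerE^{-\delta}$. (Equivalently, $(1-\delta)^{-t}\tilde{X}_t$ is a supermartingale.)

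The tail bound then comes from integrality. Since the process takes values in $[0 .. n]$, for $t < T$ we have $X_t \ge 1$, hence $\{T > t\} = \{\tilde{X}_t \ge 1\}$ for every integer $t$. Markov's inequality yields $\Pr[T > t] = \Pr[\tilde{X}_t \ge 1] \le \E[\tilde{X}_t] \le n\,\eulerE^{-\delta t}$. Substituting the integer closest to $\frac{1}{\delta}(r + \ln n)$ makes the right-hand side at most $n \cdot \eulerE^{-(r + \ln n)} = \eulerE^{-r}$, which is the claimed bound; the only care needed is that $T$ is integer-valued, so a real threshold must be evaluated at the appropriate integer point.

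For the expectation, the elementary route is $\E[T] = \sum_{t \ge 0} \Pr[T > t]$, split at the crossover $t_0 \coloneqq \frac{1}{\delta}\ln n$ where $n\,\eulerE^{-\delta t}$ falls below $1$: bounding $\Pr[T > t] \le 1$ for $t < t_0$ contributes about $\frac{1}{\delta}\ln n$, and the geometric tail $\sum_{t \ge t_0} n\,\eulerE^{-\delta t} = O(1/\delta)$ accounts for the rest. To obtain the clean logarithmic leading term matching the statement, the sharper route is to introduce the logarithmic potential $Y_t \coloneqq \frac{1}{\delta}(1 + \ln X_t)$ on $\{X_t \ge 1\}$ with $Y_t \coloneqq 0$ on $\{X_t = 0\}$, verify through concavity of the logarithm that $\E[Y_t - Y_{t+1} \mid X_t] \ge 1$ for $t < T$, and invoke the additive drift theorem to conclude $\E[T] \le Y_0 \le \frac{1}{\delta}(1 + \ln n)$.

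The main obstacle I anticipate is purely the bookkeeping around the stopping time: the drift is assumed only for $t < T$, so one must consistently work with the absorbed process $\tilde{X}$ to be sure that no step of the iteration, of Markov's inequality, or of the identity $\{T > t\} = \{X_t \ge 1\}$ is silently extended beyond $T$. The second delicate point is the concavity estimate underlying the logarithmic potential, which is what pins down the precise logarithmic factor; everything else reduces to elementary manipulation of a geometric series.
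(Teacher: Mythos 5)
The paper never proves this theorem: it is imported verbatim as a tool, with the expectation bound cited from Doerr, Johannsen, and Winzen and the tail bound from Doerr and Goldberg (see also Lengler's survey). So the only meaningful comparison is with those standard proofs, and your proposal reconstructs them faithfully. The stopped process $\tilde{X}_t = X_{\min(t,T)}$, the iterated contraction $\E[\tilde{X}_t] \le (1-\delta)^t n$, and Markov's inequality applied to $\{T > t\} = \{\tilde{X}_t \ge 1\}$ is exactly the classical argument for the tail bound, and the potential $\frac{1}{\delta}(1 + \ln X_t)$ combined with additive drift is precisely the original multiplicative-drift proof of the expectation bound. Your argument is correct in substance.

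Two caveats, neither of which is really a defect of your reasoning. First, what your proof delivers for the expectation is $\E[T] \le \frac{1}{\delta}(1 + \ln n)$, and this mismatch with the stated $\frac{1}{\delta}\ln n$ cannot be repaired, because the bound as printed is false: the process on $[0..n]$ that moves from $s$ to $s-1$ with probability $\delta s$ (take $\delta \le 1/n$) and otherwise stays put satisfies the drift condition with equality, yet started at $X_0 = n$ it has $\E[T] = \frac{1}{\delta}\sum_{s=1}^{n} \frac{1}{s} > \frac{1}{\delta}\ln n$. The cited sources carry the additive $+1$ (in the form $\frac{1+\ln(X_0/s_{\min})}{\delta}$); the paper silently dropped it, and your proof recovers the correct form. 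Second, your handling of rounding in the tail bound is too optimistic. Since $T$ is integer-valued, $\Pr[T > s] = \Pr[T > \lfloor s \rfloor]$, so your bound must be evaluated at $\lfloor s \rfloor$, which costs a factor $\eulerE^{\delta}$ relative to the claim. Closing this gap requires keeping $(1-\delta)^{\lfloor s \rfloor}$ rather than $\eulerE^{-\delta \lfloor s \rfloor}$ and exploiting $\ln\frac{1}{1-\delta} > \delta$, and even then the stated inequality only holds once $r + \ln n$ exceeds a small constant; for $n = 1$ and $0 < r < \delta$ it is simply violated, an edge case the statement inherits from informal renderings in the literature. Neither point affects the asymptotic uses of the theorem in the paper, but you should be aware that what you can actually prove is the cited $\frac{1}{\delta}(1+\ln n)$ version, not the statement as printed.
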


\section{The \moead}
\label{sec:moead}
We analyze the \emph{decomposition-based multi-objective evolutionary algorithm} (\moead; \Cref{alg:moead})~\cite{ZhangL07} for multi-objective optimization.
The \moead decomposes its objective function into a pre-specified number of single-objective subproblems.
These subproblems are optimized in parallel. %, and solutions among different subproblems are exchanged with respect to a neighborhood structure specified as input.
The \moead maintains an archive of the $\succeq$-best solutions found so far, allowing it to find Pareto-optimal solutions for the original problem while only explicitly optimizing the subproblems.

More formally, given, besides others, a bi-objective optimization problem~$f$ as well as a \emph{decomposition number} $N \in [n]$ and a \emph{weight vector} $w \in [0, 1]^{N + 1}$, the \moead optimizes~$f$ by decomposing~$f$ into $N + 1$ single-objective subproblems $\{g_i\colon \{0, 1\}^n \times \R^2 \to \R\}_{i \in [0 .. N]}$, weighted by~$w$.
Each of these subproblems is subject to \emph{minimization} (of an error).
The \moead maintains a population of $N + 1$ individuals $(x_i)_{i \in [0 .. N]} \in (\{0, 1\}^n)^{N + 1}$ as well as a \emph{reference point} $z^* \in \R^2$ such that for each $i \in [0 .. N]$, individual~$x_i$ is the currently best solution found for subproblem~$i$ with respect to~$z^*$.
Ideally, the reference point is a point such that for all $j \in [2]$, value~$z^*_j$ is optimal for objective~$f_j$.
To this end, the \moead updates~$z^*$ whenever it optimizes a subproblem.
Moreover, the algorithm maintains a set $P \subseteq \R^2$ (the \emph{Pareto front}) of non-dominated objective values.

We consider subproblems that measure the maximum distance to the reference point, known as Chebyshev approach.
That is, for all $i \in [0 .. N]$, $x \in \{0, 1\}^n$, and $z^* \in \R^2$, it holds that
\begin{align}
    \label{eq:moead-subproblem}
    g_i(x, z^*) = \max \bigl(w_i \cdot |z^*_1 - f_1(x)|, (1 - w_i) \cdot |z^*_2 - f_2(x)|\bigr) .
\end{align}

When minimizing subproblem $i \in [0 .. N]$, the \moead picks~$x_i$ as parent and mutates it according to a given mutation operator.
Afterward, it compares the offspring of the mutation to~$x_i$ and selects the better one.\footnote{We note that the general \moead allows to specify neighborhoods among the subproblems, which exchange non-dominated solutions among each other. In this article, we focus on no exchange.} % all individuals in the population given by a neighborhood set $\Gamma_i \subseteq [0 .. N]$ and selects the better one each time.

We define the \emph{runtime} of the \moead on~$f$ as the number of function evaluations of~$f$ until the Pareto front~$P$ of the algorithm is equal to the Pareto front~$F^*$ of~$f$ for the first time.

In this article, we consider the \moead with different mutation operators. %and with different neighborhood sets.

\begin{algorithm2e}[t]%
    \caption{\label{alg:moead}
        The \moead~\cite{ZhangL07} maximizing a bi-objective problem $f\colon \{0, 1\}^n \to \R^2$.
        See also \Cref{sec:moead}.
    }
    \KwIn{A decomposition number $N \in \N_{\geq 1}$, a weight vector $w \in [0, 1]^{N + 1}$, subproblems $\{g_i\}_{i \in [0 .. N]}$, %neighborhood sets $\{\Gamma_i\}_{i \in [0 .. N]}$,
        a mutation operator $\mathrm{mut}\colon \{0, 1\}^n \to \{0, 1\}^n$, and a termination criterion.
    }
% \emph{Input}: The power-law exponent $\beta>1$ to use, $f:\{0,1\}^n\to \R^d$ the function to minimize, the number of subproblems $K=N+1$ considered as well as the objective functions $g_i$ for $i\in[0..N]$\;
    \emph{Initialization}: for each $i \in [0 .. N]$, choose~$x_i$ uniformly at random from $\{0, 1\}^n$; set $z^*_1 = \max_{i \in [0 .. N]} f_1(x_i)$, $z^*_2 = \max_{i \in [0 .. N]} f_2(x_i)$, and iteratively add $f(x_i)$ to~$P$ if there is no $j \in [0 .. i - 1]$ such that~$x_i$ is weakly dominated by~$x_j$\;

    \While{stopping criterion is not met}{
        \For{each subproblem $i \in [0..N]$}{
            \emph{Mutation}: $y \gets \mathrm{mut}(x_i)$\;
            \emph{Update~$z^*$}: set $z^*_1 \gets \max(z^*_1, f_1(y))$, $z^*_2 \gets \max(z^*_2, f_2(y))$\;
            % \emph{Update neighbors}: for each $j \in \Gamma_i$, if $g_j(y, z^*) \leq g_j(x_j, z^*)$, then $x_j \gets y$\;
            \emph{Update~$x_i$}: if $g_i(y, z^*) \leq g_i(x_i, z^*)$, then $x_i \gets y$\;
            \emph{Update $P$}: remove all elements weakly dominated by~$f(y)$ from~$P$ and add~$f(y)$ to~$P$ if it is not weakly dominated by an element of $P$\;
        }
    }
\end{algorithm2e}

% \subsection{The Weight Vector}
% Also note that when the weight vectors are defined by $w_i = (1-i/N, i/N)$, then $g_i$ has as optima individuals with $i$ 0s, this will be used extensively.

% \subsubsection*{Neighborhood Sets}
% We consider the \emph{single neighborhood}, where for each $i \in [0 .. N]$ holds that $\Gamma_i = \{i\}$, as well as the \emph{complete neighborhood}, where for each $i \in [0 .. N]$ holds that $\Gamma_i = [0 .. N]$.

\subsubsection*{Mutation Operators}
We consider both \emph{standard bit mutation} as well as \emph{power-law mutation}~\cite{DoerrLMN17}.
Let $x \in \{0 , 1\}^n$ be the input (the \emph{parent}) of the mutation.
Both operators create a new individual (the \emph{offspring}) by first copying~$x$ and then adjusting its bit values.
Standard bit mutation flips, for all $i \in [n]$, bit~$x_i$ independently with probability~$1/n$.

Power-law mutation requires a parameter $\beta \in (1, 2)$ (the \emph{power-law exponent}) as input and utilizes the power-law distribution~$\mathrm{Pow}(\beta, n)$ over $[n]$, defined as follows.
Let $C_\beta = \sum_{i \in [n]} i^{-\beta}$ as well as $X \sim \mathrm{Pow}(\beta, n)$. For all $i \in [n]$, it holds that $\Pr[X = i] = i^{-\beta} / C_\beta$.
The power-law mutation first draws $X \sim \mathrm{Pow}(\beta, n)$ (the \emph{mutation strength}) and then flips an $X$-cardinality subset of positions in~$x$ chosen uniformly at random.
% The last step of this operator corresponds to the \emph{hypergeometric distribution}, which we discuss in more detail in the following.

The following lemma bounds the probability to mutate an individual with at most~$\frac{n}{4}$ $0$s into one with at most~$\frac{n}{2}$, showing that the probability is proportional to the distance in the number of~$0$s.
Its proof makes use of the anti-concentration of the \emph{hypergeometric distribution} (\Cref{lem:hypergeo}) around its expectation, which we discuss after \Cref{lem:trans_prob}.
We note that, due to space restrictions, the proofs of these results are found in the appendix.

% We first prove that we can mutate an individual with $u<n/2$ 0-bits to an individual with $u < v \le n/2$ 0-bits with reasonable probability compared to $v-u$. This is heavily based on \Cref{lem:hypergeo}.

\begin{lemma}\label{lem:trans_prob}
    Let $x, y \in \{0, 1\}^n$ with $u \coloneqq |x|_0 \in [0 .. \frac{n}{4}]$ and $v \coloneqq |y|_0 \in [u + 1 .. \frac{n}{2} - 1]$ with $u < v$.
    Moreover, let $\beta \in (1, 2)$, and let $\mathrm{mut}_\beta$ denote the power-law mutation with power-law exponent~$\beta$.
    Then $\Pr[\mathrm{mut}_\beta(x) = y] = \Omega\bigl((v-u)^{-\beta}\bigr)$.
    % If $0 < u < v < n/2$ with $u\le n/4$, then it is possible to mutate an individual with $u$ 0-bits to an individual with $v$ 0-bits using a power law with probability at least
    % $$ \Theta\left(\left(v-u\right)^{-\beta}\right) $$
\end{lemma}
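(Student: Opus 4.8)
<br />

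The goal is to lower-bound the probability that power-law mutation transforms $x$ (with $u$ zeros) into the specific string $y$ (with $v$ zeros), where $u < v \le \frac{n}{2}-1$ and $u \le \frac{n}{4}$. The plan is to isolate a single favorable value of the mutation strength and show it alone contributes $\Omega((v-u)^{-\beta})$.

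\textbf{Step 1: Fix the mutation strength.}
To turn $x$ into $y$ exactly, the set of flipped positions must be precisely the symmetric difference of $x$ and $y$. Since $y$ has $v-u$ more zeros than $x$, and we want $\mathrm{mut}_\beta(x) = y$ with as few flips as possible, the natural choice is the mutation strength $X = v - u$: I would condition on the event $\{X = v-u\}$, which flips exactly $v-u$ bits. For $\mathrm{mut}_\beta(x)$ to equal $y$ under this strength, all $v-u$ flipped positions must lie among the $1$-bits of $x$ that are $0$-bits of $y$ — i.e. the flips must turn exactly the right ones into zeros and touch nothing else. Because $u < v$ and $y$ differs from $x$ in exactly $v-u$ positions (all of them $1\to 0$ flips, as $y$ is obtained from $x$ by converting $v-u$ ones into zeros — this requires $y$ to have its zeros as a superset of $x$'s zeros; if the statement intends a generic $y$ with $|y|_0 = v$ rather than this nested structure, one instead counts the Hamming distance, but the minimal-flip contribution is what matters), the event decomposes as the product of the probability of drawing strength $v-u$ and the conditional probability of hitting the correct subset.

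\textbf{Step 2: Bound the two factors.}
By the definition of $\mathrm{Pow}(\beta, n)$, $\Pr[X = v-u] = (v-u)^{-\beta}/C_\beta$, and since $C_\beta = \sum_{i\in[n]} i^{-\beta} \le \sum_{i\ge 1} i^{-\beta} = \zeta(\beta)$ is a constant depending only on $\beta$, this factor is $\Omega((v-u)^{-\beta})$. The conditional probability that a uniformly random $(v-u)$-subset of the $n$ positions equals the prescribed set is $1/\binom{n}{v-u}$; more robustly, I would bound the probability that the flipped set lands in the correct region (the $0$-bits of $y$ among the $1$-bits of $x$) using the hypergeometric distribution, which is exactly where \Cref{lem:hypergeo} enters: the number of flipped positions that fall on the "correct" $1$-bits follows a hypergeometric law, and the anti-concentration bound guarantees that the probability of hitting the exact target count near the expectation is $\Omega(1/\sqrt{\text{variance}})$ rather than exponentially small. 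This is what keeps the conditional factor from overwhelming the gain from the mutation-strength factor.

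\textbf{The main obstacle.}
The delicate point is the conditional probability factor: a single prescribed subset has probability $1/\binom{n}{v-u}$, which is astronomically small, so the claim $\Omega((v-u)^{-\beta})$ cannot follow from demanding the \emph{exact} target set. The resolution must be that the lemma counts $y$ only up to its zero-count $v$ (i.e. the relevant quantity is reaching Hamming level $v$, or the objective value $(v, n-v)$ on $\omm$), not a fixed bit pattern — so the conditional factor becomes the probability that a hypergeometric variable equals its target, and here \Cref{lem:hypergeo} supplies an $\Omega(1/\sqrt{\min(u, n-u)}\,)$ or comparable constant-order contribution once the regime $u \le n/4$, $v \le n/2$ keeps everything away from the boundary. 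Reconciling the exact-string phrasing of the statement with this counting — confirming whether we sum over all intermediate strengths or isolate $X=v-u$, and verifying that the hypergeometric anti-concentration exactly cancels the combinatorial factor — is where the real work lies; the two power-law and hypergeometric estimates are individually routine.
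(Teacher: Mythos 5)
You correctly diagnose that the statement must be read as reaching zero-count~$v$ (this is indeed how the paper uses it), and you correctly identify \Cref{lem:hypergeo} as the key tool. However, your concrete plan---conditioning on the single mutation strength $X = v-u$---fails, and for a reason that is exactly the crux of the lemma. With strength $v-u$, the event ``the offspring has $v$ zeros'' requires \emph{all} $v-u$ flipped positions to land on $1$-bits of~$x$, i.e.\ the hypergeometric variable counting flips on $0$-bits (distributed as $\mathrm{Hyp}(n,u,v-u)$, with mean $(v-u)u/n$) must equal~$0$. This target is \emph{not} near the expectation: for instance with $u = n/4$ and $v = n/2 - 1$ the mean is $\Theta(n)$, the value $0$ lies $\Theta(\sqrt{n})$ standard deviations away, and the probability is $e^{-\Theta(u(v-u)/n)}$---exponentially small, far below $(v-u)^{-\beta}$. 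The anti-concentration bound of \Cref{lem:hypergeo} only applies within $2\sqrt{\Var}$ of the mean, so it cannot rescue this event. Moreover, even in regimes where $0$ \emph{is} near the mean, a single strength would only contribute $\Omega\bigl((v-u)^{-\beta}/\sqrt{\Var}\bigr)$, since anti-concentration gives $\Omega(1/\sqrt{\Var})$ per point value, not $\Omega(1)$.

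The paper's proof repairs both defects simultaneously by summing over a \emph{window} of strengths. Writing $\Delta$ for the number of $0$-bits of~$x$ that get flipped (so the strength is $2\Delta + v - u$ and the event is $\mathrm{Hyp}(n,u,2\Delta+v-u) = \Delta$), it centers the window at $\Delta_0 = \frac{1}{1-2u/n}\cdot\frac{(v-u)u}{n}$, the unique value where the hypergeometric mean equals the target~$\Delta$. For each of the $\Omega(\sigma)$ values $\Delta = \Delta_0 + i$ with $0 \le i \le \alpha\sigma$ (where $\sigma^2 = \Var[H_{\Delta_0}]$), the target stays within the anti-concentration window, so each term contributes $\Omega\bigl((2\Delta+v-u)^{-\beta}/\sigma\bigr)$; summing the $\Omega(\sigma)$ terms makes the $1/\sigma$ factors cancel. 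The remaining work---which your proposal does not anticipate---is showing that every strength in this window satisfies $2\Delta_0 + 2i + v - u \le 24(v-u)$ (using $u \le n/4$, hence $1 - 2u/n \ge 1/2$), so that the power-law factor is still $\Omega\bigl((v-u)^{-\beta}\bigr)$ throughout the window. Without this step the extra flips $\Delta_0$ could a priori destroy the $(v-u)^{-\beta}$ rate. So your proposal identifies the right ingredients but leaves the actual mechanism (window summation around $\Delta_0$, plus the $O(v-u)$ bound on the strengths in that window) as an open question, and the one concrete route you commit to would give an exponentially weaker bound.
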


\paragraph{Hypergeometric distribution.}
The hypergeometric distribution ($\mathrm{Hyp}$) takes three parameters, namely, $n \in \N$, $k \in [0 .. n]$, and $r \in [0 .. n]$, and it has a support of $[\max(0, r + k - n) .. \min(k, r)]$.
A random variable $X \sim \mathrm{Hyp}(n, k, r)$ describes the number of good balls drawn when drawing~$r$ balls uniformly at random without replacement from a set of~$n$ balls, out of which~$k$ are good.
That is, for all $i \in [\max(0, r + k - n) .. \min(k, r)]$ holds $\Pr[X = i] = \binom{k}{i} \binom{n - k}{r - i} / \binom{n}{r}$.
Moreover, $\E[X] = r \frac{k}{n}$ as well as $\Var[X] = r \frac{k}{n} (1 - \frac{k}{n}) \frac{n - r}{n - 1}$.
% Hypergeometric distributions are central to the mutation algorithm. Indeed, if you mutate an individual with $u$ 0-bits, then if you fix $r$ the number of bits you flip, the number of $0$s you flip follows a hypergeometric distribution of parameters $n, u, r$ defined by
%
% \begin{definition}
%     A discrete random variable $X$ is said to follow a hypergeometric distribution with paremeters $n, k, r$ (also noted $X\sim H(n, k, r)$) if $X$ describes the number of good balls drawn when drawing uniformly at random $r$ balls without replacement in a set of $n$ balls, $k$ of which are good.
% \end{definition}
In the context of power-law mutation, $n$ represents the number of bits, $k$ the number of specific bits to flip (for example, $0$-bits), and~$r$ represents the mutation strength.

The following lemma shows that the hypergeometric distribution has a reasonable probability of sampling values that deviate only by the standard deviation from its expected value.
% We will need an interesting lemma: it's well known that a random distribution $X$ is concentrated around its mean, which means the probability mass is concentrated in segments of size $\sqrt{\Var[X]}$ around $\E[X]$. But we need something more precise. More specifically, that the probability of $X$ being equal to a point around the mean is $\Omega(1 / \sqrt{\Var[X]})$. Formally, this is what we want:
\begin{lemma}\label{lem:hypergeo}
    Let $n \in \N$, $k \in [0 .. \frac{n}{2}]$, and $r \in [0 .. \frac{3}{4}n]$, and let $\gamma \in  \R_{> 0}$ be a constant.
    Moreover, let $H \sim \mathrm{Hyp}(n, k, r)$. %with $k< n / 2$ and $r \le 3n/4$,
    Then, for all $x \in [\E[H] - 2 \sqrt{\Var[H]}, \E[H] + 2 \sqrt{\Var[H]}]$ holds that $\Pr[H = x] \ge \gamma / \sqrt{\Var[H]}$.
\end{lemma}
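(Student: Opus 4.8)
The plan is to establish the existence of an absolute constant $\gamma > 0$ for which the bound holds; throughout I may assume that $\sigma \coloneqq \sqrt{\Var[H]}$ exceeds a suitable constant, which is the regime in which the lemma is applied (when $\Var[H] = O(1)$, the interval below contains only $O(1)$ integers and the estimate is degenerate). Write $\mu \coloneqq \E[H] = rk/n$, $p(i) \coloneqq \Pr[H = i]$, and $I \coloneqq [\mu - 2\sigma, \mu + 2\sigma]$. The whole argument rests on two facts: first, the peak probability of $H$ is of order $1/\sigma$; and second, over the window $I$ of width $\Theta(\sigma)$ around the mean, $p$ drops from its peak by at most a constant factor. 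Combining the two yields $p(x) = \Omega(1/\sigma)$ uniformly on $I$, which is the claim.

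For the first fact I would use that the hypergeometric distribution is log-concave (the ratio $p(i+1)/p(i) = \frac{(k-i)(r-i)}{(i+1)(n-k-r+i+1)}$ is decreasing in $i$) and hence unimodal, with mode $m$ within $O(1)$ of $\mu$. Chebyshev's inequality gives $\Pr[H \in I] \ge 3/4$, and since $I$ contains at most $4\sigma + 1$ integers, some point of $I$ has probability at least $\frac{3/4}{4\sigma + 1} = \Omega(1/\sigma)$; as $\sigma$ is large enough we have $m \in I$, so the peak $p(m) = \max_i p(i) = \Omega(1/\sigma)$.

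The second fact is the crux. Here I would work with the explicit one-step log-ratio $\ell(i) \coloneqq \ln(p(i+1)/p(i)) = \ln(k-i) + \ln(r-i) - \ln(i+1) - \ln(n-k-r+i+1)$, whose derivative $\frac{d}{di}\ell(i) = -\bigl(\frac{1}{k-i} + \frac{1}{r-i} + \frac{1}{i+1} + \frac{1}{n-k-r+i+1}\bigr)$ equals $-1/\sigma^2$ up to lower-order terms at $i = \mu$ (a short computation shows the four reciprocals, rescaled by $\sigma^2 \approx rk(n-k)(n-r)/n^3$, sum to $1$; the hypotheses $k \le n/2$ and $r \le 3n/4$ are exactly what keep the denominators positive and of the right order throughout $I$). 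Since $\ell$ changes sign at the mode $m = \mu + O(1)$, these facts give $\ell(i) \approx -(i - \mu)/\sigma^2$ and, more usefully, a uniform bound $|\ell(i)| \le C\,(|i - \mu| + 1)/\sigma^2$ on $I$ for an absolute constant $C$. Telescoping $\ln(p(x)/p(m)) = \sum \ell(\cdot)$ from $m$ to any $x \in I$, the sum is bounded in absolute value by $\sum_{s=0}^{O(\sigma)} C(s+1)/\sigma^2 = O(1)$, whence $p(x) \ge p(m)\,\eulerE^{-O(1)}$.

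Putting the pieces together, $p(x) \ge p(m)\,\eulerE^{-O(1)} = \Omega(1/\sigma)$ for every $x \in I$, and fixing $\gamma$ as the resulting constant proves the statement. The main obstacle is the second fact: turning the Gaussian-type heuristic $\ell(i) \approx -(i - \mu)/\sigma^2$ into a rigorous, regime-robust bound $|\ell(i)| \le C(|i - \mu| + 1)/\sigma^2$ on $I$, since this is where the Taylor estimates of the four logarithmic terms and the role of the hypotheses $k \le n/2$ and $r \le 3n/4$ enter. Everything else reduces to log-concavity, Chebyshev's inequality, and pigeonhole.
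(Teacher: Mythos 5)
Your outline coincides with the paper's own proof strategy: Chebyshev's inequality plus pigeonhole to locate one point of the window $I$ carrying probability $\Omega(1/\sigma)$, then a bound on the one-step ratios $p(i+1)/p(i)$, telescoped across the window of width $O(\sigma)$, to spread that bound to every point of $I$. Your skeleton is sound, and two of your side remarks are exactly right: the four reciprocals $\frac{1}{k-i}+\frac{1}{r-i}+\frac{1}{i+1}+\frac{1}{n-k-r+i+1}$ evaluated at $i = rk/n$ do sum to $\frac{n^3}{rk(n-k)(n-r)} = \bigl(1+o(1)\bigr)/\sigma^2$ (the numerators over the common denominator add up to exactly $n^2$), and both your argument and the paper's implicitly require $\sigma = \Omega(1)$ — a restriction you state explicitly while the paper leaves it silent.

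The gap is that you never prove the step you yourself label the main obstacle: the bound $|\ell(i)| \le C\,(|i-\mu|+1)/\sigma^2$ on $I$. That flatness estimate is the entire technical content of the lemma — log-concavity, Chebyshev, and pigeonhole are routine — and the paper devotes a separate result to its analogue (\Cref{lem:ratio_bound}), proved by roughly a page of direct algebraic manipulation of the ratio $\delta_i$, with the hypotheses $k \le \frac{n}{2}$ and $r \le \frac{3}{4}n$ entering precisely where you predict, namely to keep the four denominators positive and of the right order on the window. Two simplifications compared to your plan are worth noting. First, the paper proves only the weaker uniform bound $1 - 7/\sigma \le \delta_i \le 1 + 7/\sigma$ for $|i| \le 2\sigma+1$; this already telescopes to a constant factor over $O(\sigma)$ steps, since $(1 \pm 7/\sigma)^{4\sigma} = \Theta(1)$, so your finer Gaussian-type bound, while true, is unnecessary. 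Second, with a uniform ratio bound one can telescope directly between the pigeonhole point and any other point of $I$, so the discussion of unimodality and the location of the mode can be dropped entirely. To turn your sketch into a proof you would have to carry out the estimates for the four logarithmic terms uniformly over $I$ — that is, essentially reproduce \Cref{lem:ratio_bound}.
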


\section{Runtime Analysis}
\label{sec:theory}

We analyze the \moead (\Cref{alg:moead}) on the \oneminmax ($\omm$) function (of dimension $n \in \N_{\geq 1}$) with subproblems spread uniformly across the Pareto front of~$\omm$.
To this end, we make the following assumptions about the input of the algorithm, which we refer to as the \emph{parametrization}:
% We consider power-law mutation with a power-law exponent of $\beta \in (1, 2)$,
We consider decomposition numbers $N \in [n]$, we define the weight vector as $w = (i \frac{n}{N})_{i \in [0 .. N]}$, and we consider the subproblems as defined in \cref{eq:moead-subproblem}. %, and we consider the single neighborhood\todo{I guess that any neighborhood that is at least single is fine?}.
In our calculations, we assume that~$N$ divides~$n$, as this avoids rounding, but we note that all results are equally valid if~$N$ does not divide~$n$, although the computations become more verbose, adding little merit.
We note that, due to space restrictions, some of our proofs are in the appendix.

Our main results are the following, bounding the expected runtime for both standard bit mutation and power-law mutation.

\begin{corollary}
    \label{thm:overall-run-time-standard-bit}
    Consider the \moead maximizing~$\omm$ with standard bit mutation and with the parametrization specified at the beginning of \Cref{sec:theory}.
    Then its expected runtime is $O(n N \log n + n^{n/(2N)} N \log n)$ function evaluations.
\end{corollary}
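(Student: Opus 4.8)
The plan is to derive \Cref{thm:overall-run-time-standard-bit} as a corollary of the two phase-wise results by splitting a run of the \moead into two consecutive phases and summing their expected durations through the law of total expectation. Concretely, I would let $T_1$ be the number of function evaluations until the population $(x_i)_{i\in[0..N]}$ first contains an optimum of every subproblem (all $g$-optima found), and let $T$ be the full runtime, i.e.\ the first time the archive satisfies $P = F^*$. \Cref{thm:first-phase} bounds the first phase by $\E[T_1] = O(nN\log n)$, which is precisely the first term. For the remaining evaluations $T - T_1$, I would condition on the state reached at time $T_1$ and invoke \Cref{thm:second-phase-standard-bit}, which analyzes a run initialized with all $g$-optima and yields the second term $O(n^{n/(2N)}N\log n)$; adding the two expectations gives the claim.

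The step that needs care is justifying that, at time $T_1$, the process is in (at least) the starting configuration assumed by the phase-two result. Two observations make this rigorous. First, persistence of the $g$-optima: for \oneminmax the objective value $\omm(x_i) = (|x_i|_0,\, n-|x_i|_0)$ depends only on $|x_i|_0$, and once the two extreme $g$-optima (the all-ones and the all-zeros strings) have been found, the reference point equals its optimal value $z^* = (n,n)$ and stays there; with this reference point, subproblem $i$ has a unique optimal number of zeros $|x|_0 = in/N$, so the acceptance rule $g_i(y,z^*) \le g_i(x_i,z^*)$ never replaces an optimal $x_i$ by an individual with a different number of zeros. Hence every $g$-optimum, once attained, remains in the population for the rest of the run. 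Second, monotonicity of the archive: since every point of $\{0,1\}^n$ is Pareto-optimal for \oneminmax, an objective value inserted into $P$ is never strictly dominated and thus never deleted, so the set of front points covered by $P$ only grows. Together these show that the configuration at time $T_1$ is at least as advanced as the initial configuration of \Cref{thm:second-phase-standard-bit}; any gap points already discovered during the first phase only shorten the second, so using the phase-two bound as an upper bound on $\E[T-T_1]$ is legitimate.

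The main obstacle is exactly this gluing: the phase-two theorem is stated for a run that \emph{starts} from all $g$-optima, whereas here it must be applied to the random (and possibly further advanced) state at the end of phase one. The persistence and monotonicity arguments above resolve this, letting me write $\E[T] = \E[T_1] + \E[T - T_1] = O(nN\log n) + O(n^{n/(2N)}N\log n)$. A minor point to check is that each iteration of the main loop performs exactly $N+1$ function evaluations, so the $\Theta(N)$ per-iteration factor is already incorporated into both phase bounds and no extra factor surfaces in the sum.
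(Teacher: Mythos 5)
Your proposal is correct and takes essentially the same route as the paper, which obtains the corollary exactly by summing the phase-one bound (\Cref{thm:first-phase}) and the phase-two bound (\Cref{thm:second-phase-standard-bit}) over the two consecutive phases. Your gluing justification (persistence of the $g$-optima under the acceptance rule once $z^* = (n,n)$, plus monotone growth of the archive) is precisely the argument the paper sketches when it notes that the $g$-optima ``are not being replaced, due to them being optimal,'' so the second-phase theorem applies verbatim from the stopping time ending phase one.
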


\begin{corollary}
    \label{thm:overall-run-time-power-law}
    Consider the \moead maximizing~$\omm$ with power-law exponent $\beta \in (1, 2)$ and with the parametrization specified at the beginning of \Cref{sec:theory}.
    Then its expected runtime is $O\left(n N \log n + n^{\beta} \log n\right)$ function evaluations.
    % MOEA-D using a heavy-tailed distribution of parameter $\beta$ with a uniformly spaced population size of $N+1$ solves \cocz in time
    % $$O\left(n N \log n + n^{\beta} \log n\right)$$
\end{corollary}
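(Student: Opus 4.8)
The plan is to obtain \Cref{thm:overall-run-time-power-law} by adding the two phase bounds: the cost $O(n N \log n)$ of driving the population to contain all $g$-optima (together with an optimal reference point), supplied by \Cref{thm:first-phase}, and the cost $O(n^\beta \log n)$ of completing the Pareto front once every $g$-optimum is present, supplied by \Cref{thm:phase-two}. Concretely, let $T_1$ be the first time at which the reference point equals its optimal value $z^* = (n,n)$ and the population $(x_i)_{i \in [0..N]}$ realizes every $g$-optimum, and let $T$ be the total runtime. I would first invoke \Cref{thm:first-phase} to get $\E[T_1] = O(n N \log n)$. The content is that, after $z^*$ has reached $(n,n)$ (handled by \Cref{lem:reference-point-optimization}), each subproblem $g_i(\cdot, z^*)$ is a function of $|x|_1$ alone with a single optimal fitness level, so it behaves like a single-objective \onemax-type problem. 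Power-law mutation flips exactly one bit with constant probability $1/C_\beta = \Theta(1)$ for $\beta \in (1,2)$, hence optimizes such a subproblem in $O(n \log n)$ iterations just like the \oea; since the $N+1$ subproblems run in parallel and one \textbf{while}-iteration costs $N+1$ function evaluations, multiplicative drift on the total remaining distance over all subproblems yields $O(n \log n)$ iterations and thus $O(n N \log n)$ function evaluations, as in \Cref{lem:finding-all-g-optima}.

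Next I would verify that at time $T_1$ the hypotheses of \Cref{thm:phase-two} hold and persist. The optimal reference point $(n,n)$ never changes once attained, and for each $i$ the quantity $|x_i|_1$ converges monotonically to the unique optimal number of ones of $g_i$ and stays there afterwards: any accepted offspring must share the minimal $g_i$-value and hence the same number of ones, so the objective value $f(x_i)$ is frozen for all $t \ge T_1$. Consequently the full set of $g$-optima remains represented in the population throughout the remainder of the run. This is precisely the configuration from which \Cref{thm:phase-two} guarantees that the remaining Pareto optima, namely the points lying in the gaps of Hamming width $n/N$ between consecutive $g$-optima, are generated within $O(n^\beta \log n)$ further function evaluations, using that power-law mutation bridges a gap of size $d$ with probability $\Omega\bigl(d^{-\beta}\bigr)$ by \Cref{lem:trans_prob}.

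Finally I would combine the bounds via the tower property. Writing $T = T_1 + (T - T_1)$ and conditioning on the information $\calF_{T_1}$ available at the stopping time $T_1$, we have $\E[T] \le \E[T_1] + \E\bigl[\,\E[T - T_1 \mid \calF_{T_1}]\,\bigr]$. Because the bound of \Cref{thm:phase-two} holds \emph{uniformly} over every population state in which all $g$-optima are present (any extra Pareto points already discovered during the first phase only help, as the archive~$P$ never discards Pareto-optimal values on $\omm$), the inner conditional expectation is $O(n^\beta \log n)$ regardless of $\calF_{T_1}$. Adding the two contributions gives $\E[T] = O(n N \log n) + O(n^\beta \log n)$, the claimed bound.

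I expect the main obstacle to be this linking step rather than either phase in isolation: one must confirm that \Cref{thm:phase-two} is genuinely a statement about \emph{any} starting configuration containing all $g$-optima, so that the tower (strong Markov) argument applies at the \emph{random} time $T_1$, and that the $g$-optima together with the optimal reference point are never lost afterwards. Once these two facts are secured, the remaining work reduces to the single-objective drift analysis of the first phase and the gap-filling analysis of the second phase, both already established by the cited results.
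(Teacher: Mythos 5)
Your proposal is correct and takes essentially the same route as the paper: the paper obtains \Cref{thm:overall-run-time-power-law} precisely by summing the first-phase bound of \Cref{thm:first-phase} with the second-phase bound of \Cref{thm:phase-two}. Your extra care at the linking step---that the optimal reference point and the $g$-optima persist after the random time $T_1$ (since any accepted offspring for an optimized subproblem must attain the same optimal $g_i$-value, freezing $f(x_i)$), so that \Cref{thm:phase-two} applies uniformly over the state at $T_1$---is left implicit in the paper but matches its intended argument.
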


Both runtime results present two terms, which stem from the two \emph{phases} into which we separate our analysis.
The first term in the results is an upper bound of the first phase, which is the number of function evaluations it takes the \moead to optimize all subproblems.
We call the solutions to these subproblems \emph{$g$-optima}.
Our bounds for either mutation operator for the first phase are the same (\Cref{thm:first-phase}).
The optimization mainly resembles performing $N + 1$ times an optimization similar to that of the well-known $(1 + 1)$ evolutionary algorithm on the \textsc{OneMax} benchmark function.
For $N = n$, our result for standard bit mutation recovers the result by Li et~al.~\cite[Proposition~$4$]{LiZZZ16}.
We go into detail about the analysis in \Cref{sec:theory:first-phase}.

The second phase starts immediately after the first phase and stops once the Pareto front of the \moead covers the Pareto front of~$\omm$.
During this analysis, we only assume that the \moead found the $g$-optima so far.
Thus, in the worst case, it still needs to find all other Pareto-optima of~$\omm$.
To this end, each such optimum needs to be created via mutation \emph{directly} from one of the $g$-optima, as the latter are not being replaced, due to them being optimal.
Depending on the \emph{gap size}, that is, the number of Pareto optima between two $g$-optima of neighboring subproblems, the mutation operator makes a big difference on the expected runtime bound.
We analyze both mutation operators separately in \Cref{sec:theory:second-phase}.

Regarding \Cref{thm:overall-run-time-standard-bit}, we see that the upper bound for standard bit mutation only handles values for $N \in [\frac{n}{2} .. n]$ without any slowdown in comparison to the first phase.
For smaller values, the upper bound is dominated by the second term and becomes super-polynomial once $N = o(n)$.
In \Cref{thm:second-phase-standard-bit}, we prove a lower bound for the second phase that shows that the expected runtime is also at least super-polynomial once $N = o(n)$.
However, as this lower bound only applies to the second phase, it may be possible during the whole optimization of~$\omm$ that the algorithm finds most of the $\omm$-Pareto optima already during the first phase, although we conjecture this not to happen for sufficiently small~$N$.

For power-law mutation (\Cref{thm:overall-run-time-power-law}), the bound for the second phase is independent of~$N$ (\Cref{thm:phase-two}).
This shows that the power-law mutation picks up missing Pareto optima fairly quickly.
In fact, even for an optimal value of $N = n$ for standard bit mutation, which results in a bound of $O(n^2 \log n)$ for the second phase, the bound for power-law mutation is still smaller by a factor of $n^{\beta - 2}$, which is less than~$1$, as we assume that $\beta \in (1, 2)$.
Moreover, for the whole optimization, the upper bound for power-law mutation is best once $N = O(n^{\beta - 1})$, that is, for smaller values of~$N$.
This is in contrast to the upper bound for standard bit mutation, which gets better for larger values of~$N$.

Overall, our results suggest that standard bit mutation profits from having many subproblems, as creating initially skipped solutions may be hard to create once all subproblems are optimized.
In contrast, power-law mutation is slowed down by optimizing many subproblems in parallel.
Instead, it profits from optimizing fewer such subproblems and then creating initially skipped solutions.

In the following, we first analyze the first phase (\Cref{sec:theory:first-phase}) and then the second phase (\Cref{sec:theory:second-phase}).

\subsection{First Phase}
\label{sec:theory:first-phase}
Recall that the first phase considers optimization of~$\omm$ only until all subproblems are optimized, that is, until all $g$-optima are found.
Our main result is the following and shows that finding the $g$-optima is not challenging for the \moead, regardless of the mutation operator.

\begin{corollary}
    \label{thm:first-phase}
    Consider the \moead maximizing~$\omm$ with the parametrization specified at the beginning of \Cref{sec:theory}.
    Then the expected time until~$P$ contains all $g$-optima of~$\omm$ is $O(n N \log n)$ function evaluations for both standard bit mutation and power-law mutation with power-law exponent $\beta \in (1, 2)$.
\end{corollary}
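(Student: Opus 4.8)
The plan is to divide the first phase into two consecutive sub-steps: first I would drive the reference point $z^*$ to its optimal value $(n,n)$, and afterwards analyse each subproblem in parallel as an essentially independent \onemax-type optimization towards its target number of~$0$s. Recall that, under the parametrization, the $g$-optimum of subproblem~$i$ is the objective value $(in/N, n - in/N)$, attained exactly by the individuals with $in/N$ zeros; hence $P$ contains all $g$-optima once every subproblem~$i$ carries an individual with $in/N$ zeros.

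For the reference point, the key observation I would use is that subproblems $0$ and $N$ decouple from $z^*$. Since $w_N = 1$, we have $g_N(x, z^*) = |z^*_1 - f_1(x)|$, and the invariant $f_1(x_N) = |x_N|_0 \le z^*_1$ (restored whenever $x_N$ is updated, because $z^*_1$ is increased by the offspring beforehand) turns this into $z^*_1 - f_1(x)$. After $z^*_1$ is updated with the offspring~$y$, the common term $z^*_1$ cancels in the acceptance test $g_N(y, z^*) \le g_N(x_N, z^*)$, which reduces to $f_1(y) \ge f_1(x_N)$, i.e.\ $|y|_0 \ge |x_N|_0$. Thus subproblem~$N$ performs exactly \onemax on the number of~$0$s and, for either mutation operator, reaches the all-$0$s string within $O(n \log n)$ iterations by multiplicative drift (\Cref{thm:multiplicative-drift}); symmetrically, subproblem~$0$ reaches the all-$1$s string. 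Both events force $z^* = (n, n)$, so $z^*$ attains its optimal value within $O(n \log n)$ iterations and is then frozen, being maximal.

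Once $z^* = (n, n)$ is fixed, each objective equals $g_i(x) = \max\bigl(\tfrac{i}{N}|x|_1, \tfrac{N-i}{N}|x|_0\bigr)$, a ``valley'' in $a \coloneqq |x|_0$ with unique minimum at $a^* = in/N$, strictly decreasing for $a < a^*$ and strictly increasing for $a > a^*$. Consequently, a single bit flip moving $a$ one step towards $a^*$ strictly decreases $g_i$ and is accepted, flips away are rejected, and $|x_i|_0$ never leaves $a^*$ once reached. Writing $d \coloneqq |a - a^*|$, the number of bits whose single flip decreases $d$ is $n - a \ge d$ for $a < a^*$ and $a \ge d$ for $a > a^*$; flipping exactly one such bit has probability $\Omega(d/n)$ for standard bit mutation and, since $\Pr[\mathrm{Pow}(\beta,n) = 1] = 1/C_\beta = \Omega(1)$, also for power-law mutation. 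This would give multiplicative drift with rate $\delta = \Omega(1/n)$, so from any starting configuration subproblem~$i$ reaches $a^*$ within $O(n \log n)$ iterations in expectation.

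Finally, I would bound the number of iterations until \emph{all} $N + 1$ subproblems reach their targets simultaneously. By the upper tail bound of \Cref{thm:multiplicative-drift}, each hitting time exceeds $\tfrac{1}{\delta}(\ln n + r)$ with probability at most $\eulerE^{-r}$; a union bound over the $N + 1 \le n + 1$ subproblems followed by integrating the tail yields an expected maximum of $O\bigl(\tfrac{1}{\delta}(\ln n + \ln N)\bigr) = O(n \log n)$ iterations. As each iteration of the while loop costs exactly $N + 1$ function evaluations, the first phase finishes in $O(n N \log n)$ expected evaluations. The hard part will be the coupling of the subproblems through the shared reference point: while $z^*$ still increases, the target $w_i n$ of a middle subproblem is itself moving, which would spoil a single clean drift argument. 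The two-step decomposition circumvents this, since after $z^*$ freezes at $(n,n)$ the drift may be restarted from an arbitrary state; a secondary technical point is that ``all $g$-optima found'' is a conjunction over $N + 1$ parallel runs, which is why the exponential tail of the drift theorem, and not merely its expectation, is required.
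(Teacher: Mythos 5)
Your overall plan coincides with the paper's: split the phase into (i) driving $z^*$ to $(n,n)$ via subproblems $0$ and $N$, which decouple into \onemax-type processes (the paper's \Cref{lem:reference-point-optimization}), and (ii) a per-subproblem multiplicative-drift argument using the tail bound of \Cref{thm:multiplicative-drift}, a union bound over the $N+1$ subproblems, and a restart/integration step to get the expected maximum (the paper's \Cref{lem:finding-all-g-optima}); the paper even unifies the two operators exactly as you do, via the probability $p_1 = \Theta(1)$ of flipping exactly one bit. Part (i) of your argument, including the invariant $f_1(x_N) \le z^*_1$ and the cancellation in the acceptance test, is sound (up to the harmless relabeling of which objective subproblem $N$ tracks).

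There is, however, a genuine gap in part (ii): you run multiplicative drift on the distance $d = |a - a^*|$ and justify the drift condition only by ``single flips towards $a^*$ are accepted, flips away are rejected.'' But acceptance compares $g_i$-values, not $d$, and the valley $g_i(a) = \max\bigl(\tfrac{i}{N}(n-a), \tfrac{N-i}{N}a\bigr)$ is \emph{asymmetric}: slope $w_i$ on one side, $1-w_i$ on the other. A multi-bit mutation can therefore jump \emph{across} the valley to a point that is farther in $d$ yet has smaller $g_i$, and such a step is accepted. Concretely, take $N = 4$, $i = 1$, and $x_i$ with $a = a^* + 1$ zeros, $a^* = n/4$: flipping exactly three $0$s (and no $1$s) has constant probability $\approx \tfrac{1}{6}(1/4)^3\eulerE^{-1}$ and produces an offspring with $a^* - 2$ zeros, whose $g_1$-value $\tfrac{3n}{16} + \tfrac12$ is strictly below the parent's $\tfrac{3n}{16} + \tfrac34$; it is accepted and $d$ increases from $1$ to $2$. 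So $d_t$ is not non-increasing, and your observations do not establish $\E[d_t - d_{t+1} \mid d_t] \ge \delta d_t$; one would have to additionally bound the negative drift contributed by these cross-valley jumps (this can be done, but it is a separate calculation you do not provide). The paper sidesteps the issue by taking the potential to be $g_i(x_i, z^*)$ itself (properly recentered by its minimum value $\tfrac{i(N-i)n}{N^2}$): this quantity is non-increasing by the very definition of the acceptance rule, and since at distance $d$ there are at least $d$ one-bit flips, each decreasing $g_i$ by exactly the active slope, the drift is at least $\tfrac{p_1}{n}\cdot d \cdot \mathrm{slope} = \tfrac{p_1}{n}\bigl(g_i - \min g_i\bigr)$, with no negative terms to control. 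Replacing your potential $d$ by this recentered $g_i$ repairs your proof; everything else (tail bound, union bound over $N+1 \le n+1$ subproblems, the factor $N+1$ evaluations per iteration) goes through as you wrote it.
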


For our proof of \Cref{thm:first-phase}, we split the first phase into two parts.
The first part considers the time until the reference point~$z^*$ is optimal, that is, until $z^* = (n, n)$. %\footnote{We note that, to this end, we implicitly assume that~$z^*$ is initialized with a value in~$\R_{\leq n}^2$. If a component is already larger than~$n$, it is not adjusted anymore.}
For this part, only the optimization of~$g_0$ and~$g_N$ is relevant.

The second part starts with an optimal reference point and considers the time until all $g$-optima are found.
For this part, we consider the optimization of an arbitrary subproblem and multiply the resulting time by roughly $N \log N$, as we consider $N + 1$ subproblems and we wait until all of them are optimized.

In order to prove results that hold for the \moead with standard bit mutation as well as with power-law mutation, we consider a general mutation operator, which we call \emph{general mutation}.
It takes one parameter $p_1 \in (0, 1]$ and works as follows:
It chooses to flip exactly one bit in the parent with probability~$p_1$ (and it flips any other number with probability $1 - p_1$).
Conditional on flipping exactly one bit, it flips one of the~$n$ bits of the parent uniformly at random and returns the result.
Note that standard bit mutation is general mutation with $p_1 = (1 - \frac{1}{n})^{n - 1}$ and that power-law mutation with power-law exponent~$\beta$ is general mutation with $p_1 = 1/C_\beta$.
Both of these values are constants.

For the first part, we get the following result.

\begin{lemma}
    \label{lem:reference-point-optimization}
    Consider the \moead maximizing~$\omm$ with the parametrization specified at the beginning of \Cref{sec:theory} and with general mutation with parameter $p_1 \in (0, 1]$.
    Then the expected time until $z^* = (n, n)$ holds is $O(\frac{n}{p_1} N \log n)$ function evaluations.
\end{lemma}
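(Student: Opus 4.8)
The plan is to reduce the question to two essentially independent single-objective optimizations, one driven by subproblem $g_0$ and one by subproblem $g_N$, each controlled by the multiplicative drift theorem (\Cref{thm:multiplicative-drift}). Observe that $z^* = (n,n)$ holds exactly once $z^*_2 = n$ (which requires some individual with $n$ ones to have been evaluated) and $z^*_1 = n$ (some individual with $n$ zeros). I would upper-bound the time for each of these two events separately, tracking only the individual $x_0$ of subproblem $0$ and the individual $x_N$ of subproblem $N$ and discarding any contribution the remaining subproblems might make to $z^*$; this only renders the bound pessimistic.

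The key structural observation is that subproblem $g_0$ decouples from the reference point. Since $w_0 = 0$, we have $g_0(x, z^*) = |z^*_2 - |x|_1|$, and because $z^*_2$ is a running maximum it always satisfies $z^*_2 \ge |x|_1$ for every individual $x$ that is or has been in the population; hence $g_0(x,z^*) = z^*_2 - |x|_1$. Consequently the acceptance test $g_0(y,z^*) \le g_0(x_0,z^*)$ is equivalent to $|y|_1 \ge |x_0|_1$, independently of the current value of $z^*_2$, so the shared $z^*_2$-offset cancels. Thus $|x_0|_1$ is nondecreasing and the dynamics of subproblem $0$ is exactly a \onemax process on the number of ones; once $|x_0|_1 = n$ we have $z^*_2 = n$. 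Symmetrically, subproblem $g_N$ favors maximizing $|x_N|_0$ with an acceptance test equivalent to $|y|_0 \ge |x_N|_0$, and reaching $|x_N|_0 = n$ forces $z^*_1 = n$.

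For the drift, let $X_t = |x_0|_0 = n - |x_0|_1$ be the number of zeros of $x_0$ after the $t$-th iteration of the outer (while) loop, so that each such iteration mutates $x_0$ once. General mutation flips exactly one bit with probability $p_1$, and conditional on this the flipped bit is one of the $X_t$ zeros with probability $X_t/n$; such an offspring has one additional one and is therefore accepted, decreasing $X_t$ by one. As $X_t$ never increases, this gives $\E[X_t - X_{t+1} \mid X_t] \ge \frac{p_1}{n} X_t$, and \Cref{thm:multiplicative-drift} with $\delta = p_1/n$ yields at most $\frac{n}{p_1}\ln n$ expected outer-loop iterations until $|x_0|_1 = n$. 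The identical argument bounds the time for $|x_N|_0 = n$. The reference point is optimal no later than the maximum of these two stopping times, whose expectation is at most their sum, i.e. $O(\frac{n}{p_1}\log n)$ iterations; since each outer-loop iteration performs $N+1$ function evaluations, multiplying gives the claimed $O(\frac{n}{p_1} N \log n)$ expected function evaluations.

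I expect the only delicate point to be the decoupling claim: one must verify carefully that the acceptance criteria for $g_0$ and $g_N$ genuinely do not depend on the evolving reference point, which hinges on $z^*_2$ (resp.\ $z^*_1$) dominating the relevant objective value of every stored individual and of the fresh offspring after the $z^*$-update step, so that the absolute values collapse and the $z^*$-offset cancels in the comparison. Everything else is a routine \onemax-style multiplicative-drift computation.
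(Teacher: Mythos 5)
Your proposal is correct and follows essentially the same route as the paper's proof: both reduce the question to the extreme subproblems, observe that the running-maximum property of the reference point makes the acceptance test collapse to a \onemax-style comparison (so the dynamics of $x_0$, resp.\ $x_N$, is a $(1+1)$-EA-type process with general mutation), and then apply the multiplicative drift theorem with $\delta = p_1/n$ before multiplying by the $N+1$ evaluations per iteration. The only differences are cosmetic---the paper argues via $g_N$ and invokes symmetry plus linearity of expectation where you bound the maximum by the sum, and you spell out the decoupling of the acceptance criterion from the evolving $z^*$ slightly more explicitly than the paper does.
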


\begin{proof}
    Let~$T$ be the first iteration such that $z^*_1 = n$.
    Without loss of generality, we only analyze $\E[T]$, as the analysis for $z^*_2 = n$ is identical when changing all~$1$s into~$0$s and vice versa in the individuals in all arguments that follow.
    Thus, the expected runtime that we aim to prove is at most $2\E[T]$ by linearity of expectation.
    Hence, we are left to show that $\E[T] = O(\frac{n}{p_1} \log n)$, where the factor of~$N$ in the bound of \Cref{lem:reference-point-optimization} stems from the \moead making $N + 1$ function evaluations per iteration.
    To this end, we only consider the optimization of~$g_N$.

    By \cref{eq:moead-subproblem}, the choice of the weight vector~$w$, and the definition of~$\omm$, it follows for all $x \in \{0, 1\}^n$ and $z \in \R^2$ that $g_N(x, z) = \max(|z_1 - |x|_1|, 0) = |z_1 - |x|_1|$.
    In each iteration, let~$x_N$ denote the best-so-far solution for~$g_N$ at the beginning of the iteration, and let~$y$ denote its offspring generated via mutation.
    Note that, due to how~$z^*$ is initialized and updated, in each iteration, it holds that $z^*_1 \geq \max(f_1(x_N), f_1(y)) = \max(|x_N|_1, |y|_1)$.
    Thus, if $|y|_1 > |x_N|_1$, then $g(y, z^*) < g(x_N, z)$, and thus~$x_N$ is updated to~$y$ at the end of the iteration.
    Hence, the optimization of~$g_N$ proceeds like a $(1 + 1)$-EA-variant with general mutation optimizing \textsc{OneMax}.

    More formally, let $(X_t)_{t \in \N}$ such that for each $t \in \N$, the value~$X_t$ denotes the number of~$0$s in~$x_N$ at the end of iteration~$t$.
    Note that $X_T = 0$.
    We aim to apply the multiplicative drift theorem (\Cref{thm:multiplicative-drift}) to~$X$ with~$T$.
    By the definition of the mutation operator, it follows for all $t < T$ that $\E[X_t - X_{t + 1} \mid X_t] \geq X_t \frac{p_1}{n}$, since it is sufficient to choose to flip one bit (with probability~$p_1$) and then to flip one of the~$X_t$ $0$s of~$x_N$ (at the beginning of the iteration), which are chosen uniformly at random.
    Thus, by \Cref{thm:multiplicative-drift}, it follows that $\E[T] \leq \frac{n}{p_1} \log n$, concluding the proof.
    \qed
\end{proof}

For the second part, we get the following result.

\begin{lemma}
    \label{lem:finding-all-g-optima}
    Consider the \moead maximizing~$\omm$ with the parametrization specified at the beginning of \Cref{sec:theory} and with $z^* = (n, n)$ and with general mutation with parameter $p_1 \in (0, 1]$.
    Then the expected time until~$P$ contains all $g$-optima of~$\omm$ is $O(\frac{n}{p_1} N \log n)$ function evaluations.
\end{lemma}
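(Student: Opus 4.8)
The plan is to treat the $N+1$ subproblems one at a time and then combine their completion times. First I would record that once $z^*=(n,n)$ it can never change, since the reference-point update in \Cref{alg:moead} only raises coordinates and both already equal~$n$. Consequently, throughout this phase the acceptance test for subproblem~$i$ uses only~$x_i$ and the fixed~$z^*$, so the subproblems evolve without interacting and each may be analysed in isolation. Substituting $z^*=(n,n)$ into \cref{eq:moead-subproblem} and writing $u\coloneqq|x|_0$, the subproblem value becomes (up to the symmetry $0\leftrightarrow 1$, which only swaps the two objectives) $g_i(x,z^*)=\max\!\bigl(w_i(n-u),(1-w_i)u\bigr)$; this is a function of~$u$ alone that strictly decreases on $[0,u_i^*]$ and strictly increases on $[u_i^*,n]$, with unique minimiser $u_i^*=w_i n = in/N$. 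Hence the $g$-optimum of subproblem~$i$ is the objective value with exactly $in/N$ zeros, and since every point is Pareto-optimal for~$\omm$, the first time an offspring with this value is created it enters~$P$ and is never removed. It therefore suffices to bound the time until each $x_i$ attains $u_i^*$ zeros.

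For a fixed subproblem~$i$ I would use the potential $\Phi_t\coloneqq g_i(x_i,z^*)-\min_x g_i(x,z^*)\ge 0$, measured at the end of iteration~$t$, which vanishes exactly at the $g$-optimum. By the selection rule~$\Phi$ is non-increasing. For the positive drift I would count only single-bit flips: when $u<u_i^*$, flipping any of the $n-u\ge u_i^*-u$ one-bits lowers $g_i$ by~$w_i$ and occurs with probability at least $p_1\frac{n-u}{n}$, so the expected decrease is at least $p_1\frac{n-u}{n}w_i\ge\frac{p_1}{n}\Phi_t$; the case $u>u_i^*$ is symmetric with the roles of $w_i$ and $1-w_i$ exchanged. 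This gives multiplicative drift with $\delta=p_1/n$. The key point is the choice of~$\Phi$ over the more natural distance $|u-u_i^*|$: because general mutation leaves multi-bit flips unspecified, an accepted jump across~$u_i^*$ could increase $|u-u_i^*|$, yet by the selection rule it can never increase~$\Phi$. I expect this—ruling out harmful long jumps by working with the accepted quantity~$g_i$ rather than the Hamming distance to the target—to be the main subtlety of the proof.

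To finish the single-subproblem bound I would apply \Cref{thm:multiplicative-drift} with $\delta=p_1/n$. The smallest positive value of~$\Phi$ is $\min(w_i,1-w_i)=\min(i,N-i)/N\ge 1/N$ (for the corners $i\in\{0,N\}$ the function $g_i$ is monotone, and the bound follows as in \Cref{lem:reference-point-optimization}), while $\Phi\le n$ always, so the ratio of the largest to the smallest positive value is $O(nN)=O(n^2)$. Thus the number~$T_i$ of iterations until $x_i$ is $g$-optimal satisfies $\E[T_i]=O(\frac{n}{p_1}\log n)$, with the tail bound $\Pr[T_i>\frac{n}{p_1}(r+c\log n)]\le\eulerE^{-r}$ for a constant~$c$; this holds for each~$i$ on every trajectory, independently of the evolution of the other subproblems, since the drift condition involves only~$x_i$ and~$z^*$.

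Finally I would bound $T\coloneqq\max_{i\in[0..N]}T_i$, the first iteration at which all $g$-optima lie in~$P$. Choosing $r=\ln(N+1)+s$ in the tail bound and taking a union bound over the $N+1$ subproblems yields $\Pr[T>\frac{n}{p_1}(s+c'\log n)]\le\eulerE^{-s}$ for a suitable constant~$c'$, where $\ln(N+1)$ is absorbed into the $\log n$ term using $N\le n$. Integrating this tail gives $\E[T]=O(\frac{n}{p_1}\log n)$ iterations, and multiplying by the $N+1$ function evaluations performed per iteration yields the claimed $O(\frac{n}{p_1}N\log n)$ function evaluations.
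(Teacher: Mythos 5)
Your proof is correct and follows essentially the same route as the paper's: per-subproblem multiplicative drift (\Cref{thm:multiplicative-drift}) with drift coefficient $p_1/n$, a drift-theorem tail bound combined with a union bound over the $N+1$ subproblems, and a final factor of $N+1$ to account for the function evaluations per iteration. If anything, your write-up is slightly more careful than the paper's, which takes $g_i(x_i,z^*)$ itself as the potential (whose minimum is not $0$ for interior~$i$) and treats it as a process on $[0..n]$, whereas your gap potential $\Phi = g_i(x_i,z^*)-\min_x g_i(x,z^*)$ together with the $x_{\min}\ge 1/N$ observation repairs both points at no asymptotic cost.
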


\begin{proof}
    Let $i \in [0, N]$.
    We bound with high probability the time~$T$ until~$g_i$ is optimized, only counting the function evaluations for subproblem~$i$.
    The result of \Cref{lem:finding-all-g-optima} follows then by considering the maximum runtime among all values of~$i$ and multiplying it by $N + 1$, as we perform $N + 1$ function evaluations per iteration and optimize all subproblems in parallel.
    We bound the maximum with high probability by taking a union bound over all $N + 1$ different values for~$i$.
    If the maximum of~$T$ over all~$i$ is at least $B \in \R_{\geq 0}$ with probability at most $q \in [0, 1)$, then we get the overall expected runtime by repeating our analysis~$\frac{1}{1 - q}$ times in expectation, as the actual runtime is dominated by a geometric random variable with success probability $1 - q$.
    The overall expected runtime is then $O(B N \frac{1}{1 - q})$
    Thus, it remains to show that $\Pr[T > \frac{n}{p_1} \bigl(\ln(n) + 2\ln(N + 1)\bigr)] \leq (N + 1)^{-2}$, as it then follows that $q \leq (N + 1)^{-1}$ and thus $\frac{1}{1 - q} \leq 2$.
    We aim to prove this probability bound with the multiplicative drift theorem (\Cref{thm:multiplicative-drift}).

    Let $(X_t)_{t \in \N}$ be such that for all $t \in \N$, value~$X_t$ denotes the value of~$g_i(x_i, z^*)$ at the beginning of the iteration.
    Note that $X_T = 0$ and that for all $t < T$ holds that $X_t \in [0 .. n]$ and that for~$X_t$ to reduce (if $X_t > 0$), it is sufficient to flip one of the~$X_t$ bits that reduce the distance.
    Thus, by the definition of the mutation operator, it follows that $\E[X_t - X_{t + 1} \mid X_t] \geq X_t \frac{p_1}{n}$.
    Overall, by (\Cref{thm:multiplicative-drift}), it follows that $\Pr[T > \frac{n}{p_1} \bigl(\ln(n) + 2\ln(N + 1)\bigr)] \leq (N + 1)^{-2}$.
    The proof is concluded by noting that $N \leq n$ and thus $\ln(n) + 2\ln(N + 1) = O(\log n)$.
    \qed
\end{proof}

By the linearity of expectation, the expected time of the first phase is the sum of the expected runtimes of both parts.
Moreover, since standard bit mutation and power-law mutation are both a general mutation with $p_1 = \Theta(1)$, we can omit~$p_1$ in the asymptotic notation.
Overall, we obtain \Cref{thm:first-phase}.

\subsection{Second Phase}
\label{sec:theory:second-phase}
% The second phase can be complicated for the \moead.
% For example, if it is missing the objective value $(\frac{1}{2}\frac{n}{N}, n - \frac{1}{2}\frac{n}{N})$, then the mutation needs to flip at least $\frac{1}{2}\frac{n}{N}$ bits, as the closest $g$-optima are $(0, n)$ and $(\frac{n}{N}, n - \frac{n}{N})$.
% If the \moead uses standard bit mutation, this results in an expected time of $n^{\Omega(n/N)}$ just for finding this single solution.
% In \Cref{sec:experiments}, we see in experiments that, for sufficiently large instances, it becomes very likely that the \moead with standard bit mutation misses optima during the first phase that are hard to generate in the second phase.

Recall that the second phase assumes that the \moead starts with the $g$-optima as its solutions to the subproblems, and it lasts until all $\omm$-Pareto optima are found.

For this phase, the actual objective function is not very important.
All that matters is that if the solutions $(x_i)_{i \in [0 .. N]}$ of the \moead are such that for all $i \in [0 .. N]$ holds that $|x_i|_1 = i \frac{n}{N}$, then~$x_i$ is optimal for~$g_i$.
We refer to such a situation as \emph{evenly spread $g$-optima}.

Since there is a drastic difference between the runtimes of standard bit mutation and power-law mutation, we analyze these two operators separately.

\subsubsection*{Standard Bit Mutation}

Since the standard bit mutation is highly concentrated around flipping only a constant number of bits, it does not perform well when it needs to fill in larger gaps.
The following theorem is our main result, and it proves an upper and a lower bound for the expected runtime of the second phase.
These bounds are not tight, but they show that the runtime is already super-polynomial once $N = o(n)$.

\begin{theorem}
    \label{thm:second-phase-standard-bit}
    Consider the \moead maximizing a bi-objective function with evenly spread $g$-optima and with standard bit mutation, using the parametrization specified at the beginning of \Cref{sec:theory}.
    Moreover, assume that $\frac{n}{2N}$ is integer and that the algorithm is initialized with $(x_i)_{i \in [0 .. N]}$ such that for all $i \in [0 .. N]$ holds that $|x_i|_1 = i \cdot \frac{n}{N}$.
    Then the expected runtime until for each $j \in [0 .. n]$ at least one individual with~$j$~$0$s is created via mutation is $O(n^{n/(2N)} N \log n) \cap \Omega(n^{(1/2)(n/N + 1)} \sqrt{N} 2^{-n/N})$ function evaluations.
\end{theorem}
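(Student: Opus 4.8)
The plan rests on one structural fact that drives both bounds. Throughout the second phase the subproblem solutions $(x_i)_{i \in [0 .. N]}$ do not move: each $x_i$ already minimizes $g_i$ (its value is $0$), so a mutant replaces $x_i$ only if it has the same number of $1$s, and hence the same $0$-count $i\frac{n}{N}$. Every individual ever produced is therefore a single standard-bit mutation of one of the $N+1$ fixed, evenly spread $g$-optima, whose $0$-counts are $0, \frac{n}{N}, \dots, n$. Consequently each target value $j \in [0 .. n]$ lies at distance $d_j \le \frac{n}{2N}$ (measured in the $0$-count) from its nearest $g$-optimum, and the analysis reduces to tracking independent ``jump'' events produced by these $N+1$ fixed sources; both bounds come from estimating the per-iteration probability that some source creates an individual with exactly $j$ zeros.

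For the upper bound I would, for each target $j$, lower-bound the probability that one mutation of its nearest $g$-optimum lands on exactly $j$ zeros. Flipping precisely the $d_j$ correct bits (ones into zeros, or zeros into ones) and none of the others already suffices; since the source has at least $d_j$ bits of the needed type, this probability is at least $\binom{m}{d_j}\left(\frac{1}{n}\right)^{d_j}\left(1-\frac{1}{n}\right)^{n-d_j} \ge \frac{1}{e}\, n^{-d_j} \ge \frac{1}{e}\, n^{-n/(2N)} \eqqcolon p_{\min}$, using only $\binom{m}{d_j} \ge 1$ and $(1-\frac1n)^{n-1} \ge e^{-1}$. As every iteration mutates every source exactly once and distinct iterations are independent, the number of iterations until a fixed $j$ is covered is dominated by a geometric variable with parameter $p_{\min}$. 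A union bound over the $n+1$ targets and integration of the geometric tail (a coupon-collector estimate) then gives $O(p_{\min}^{-1}\log n) = O(n^{n/(2N)}\log n)$ expected iterations, and the $N+1$ evaluations per iteration turn this into the claimed $O(n^{n/(2N)} N \log n)$.

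For the lower bound it is enough to bound the time to cover one maximally hard target, as the phase cannot end before it is covered. I would pick the midpoint $j^*$ of a central gap, that is, a point at distance exactly $\frac{n}{2N}$ from its two neighboring $g$-optima whose relevant bit-counts are both close to $\frac{n}{2}$; among all gaps this balanced choice is the least likely to be hit, because the contributing binomial coefficients are convex in the source size and are equalized there. Because the sources are fixed, the per-iteration hitting probability $P^*$ of $j^*$ is a constant, so the expected time to cover $j^*$ is exactly $1/P^*$ iterations. I would bound $P^*$ from above by summing over all sources the probability $q_i \le \binom{m_i}{D_i}\left(\frac1n\right)^{D_i}$ of jumping the distance $D_i$ from source $i$; these decay geometrically in $D_i$, so the two nearest sources dominate and $P^* = O\!\left(\binom{n/2}{n/(2N)}\, n^{-n/(2N)}\right)$. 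A Stirling estimate of this binomial coefficient, inverted and multiplied by $N+1$, yields a lower bound of the stated form $\Omega\!\left(n^{(1/2)(n/N+1)}\sqrt{N}\,2^{-n/N}\right)$, whose super-polynomial growth for $N = o(n)$ is already carried by the $n^{n/(2N)}$ factor.

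The routine part is the upper bound, once the coupon-collector tail estimate is in place. The main obstacle is the lower-bound computation of $P^*$: I must verify that the ``overshoot-and-correct'' contributions (flipping $d+s$ bits of one type and $s$ of the other for $s \ge 1$) and the contributions of the farther sources are genuinely of lower order, so that only the two nearest $g$-optima matter, and then carry out the Stirling (anti-concentration) estimate of $\binom{n/2}{n/(2N)}\,n^{-n/(2N)}$ sharply enough to produce the claimed exponent.
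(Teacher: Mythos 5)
Your upper bound is correct and is essentially the paper's own argument: a per\nobreakdash-iteration probability of at least $e^{-1}n^{-d_j}\ge e^{-1}n^{-n/(2N)}$ for each missing value, a geometric/drift tail bound, and a union bound over the $O(n)$ targets, multiplied by the $N+1$ evaluations per iteration (the paper organizes the union bound per subproblem and uses the multiplicative drift tail, but the content is the same).

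Your lower bound, however, has a genuine gap in its final step, and it is not a gap you can close. Write $b\coloneqq n/(2N)$. Your estimate of the per-iteration probability of hitting the central midpoint, $P^*=O\bigl(\binom{n/2}{b}n^{-b}\bigr)$, is correct, and it is essentially \emph{tight}: flipping exactly $b$ of the roughly $n/2$ available bits of a nearest source already contributes $\Omega\bigl(\binom{n/2}{b}n^{-b}\bigr)$. But inverting it does not give the stated bound. By Stirling, $\binom{n/2}{b}n^{-b}\le \frac{(n/2)^b}{b!\,n^b}=\frac{1}{2^b\,b!}=\Theta\bigl(\frac{1}{\sqrt{b}}\bigl(\frac{eN}{n}\bigr)^{b}\bigr)$, so your route yields $(N+1)/P^*=\Omega\bigl(\sqrt{nN}\,\bigl(\frac{n}{eN}\bigr)^{b}\bigr)$, whereas the stated bound equals $\bigl(\frac{n}{4}\bigr)^{b}\sqrt{nN}$. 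These differ by the factor $\bigl(\frac{eN}{4}\bigr)^{b}$, which grows exponentially in $b$ for every $N\ge 2$. So your argument proves a strictly weaker bound than claimed, and since your bound on $P^*$ is tight, no choice of target can recover the stated exponent.

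The comparison with the paper's proof shows why. The paper targets the midpoint of the \emph{first} gap (the value with $b$ ones) and obtains the $(n/4)^b$ shape by bounding the hitting probability from $x_1$ (which has $2b$ ones) by $\binom{2b}{b}n^{-b}$ --- correct, since at least $b$ of its $2b$ ones must flip --- and then asserting the \emph{same} bound for the all-zeros parent $x_0$. Your own reasoning from the upper bound refutes that step: from $x_0$, an offspring with $b$ ones arises by flipping \emph{any} exactly $b$ bits, which has probability $\binom{n}{b}n^{-b}(1-\frac1n)^{n-b}\ge e^{-1}N^b\binom{2b}{b}n^{-b}$ (using $\binom{n}{b}\ge N^b\binom{2b}{b}$ for $n\ge 2b$), exceeding the paper's bound by $\Omega(N^b)$. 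Consequently the stated lower bound appears to be valid only for $N=1$: for instance, at $N=n/2$ (so $b=1$) it asserts $\Omega(n^2)$, yet every missing value is at Hamming distance one from a $g$-optimum and the front is completed within $O(n\log n)$ expected evaluations. In short, your central-gap analysis is the sound one and still gives a super-polynomial bound when $n/N$ grows fast enough, but the specific bound $\Omega\bigl(n^{(1/2)(n/N+1)}\sqrt{N}\,2^{-n/N}\bigr)$ cannot be derived by your argument --- nor, for $N\ge 2$, does it appear to be true.
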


\subsubsection*{Power-Law Mutation}

The power-law mutation allows to create individuals at larger distance from its parent with far higher probability than standard bit mutation.
Our main result is the following theorem, which shows that the \moead with power-law mutation optimizes the second phase of~$\omm$ efficiently.
As before, we state this theorem in a more general fashion.

\begin{theorem}
    \label{thm:phase-two}
    Consider the \moead optimizing a bi-objective function with evenly spread $g$-optima, using the parametrization specified at the beginning of \Cref{sec:theory}.
    Moreover, assume that the algorithm is initialized with $\{i \cdot \frac{n}{N}\}_{i \in [0 .. N]}$.
    Then the expected runtime until for each $j \in [0 .. n]$ at least one individual with~$j$~$0$s is created via mutation is $O(n^\beta\log n)$.
    % If the population consists of individuals $x_i$ with $||x_i||_0 = i/N$ for $0\le i \le N$, then one finds the whole Pareto Front in $O(n^\beta\log n)$ function evaluations.
\end{theorem}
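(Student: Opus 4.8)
The plan is to exploit that, throughout the second phase, every $g$-optimum is already optimal for its subproblem and can never be replaced by a strictly worse offspring; hence the population keeps, for each $i \in [0 .. N]$, an individual occupying the level $i\frac nN$ (in number of zeros), possibly swapped over time for another individual of the same level, which does not affect the estimates below. Consequently each iteration performs $N+1$ mutually independent mutations, one of each $g$-optimum, and for a fixed target level $j$ the probability $p_j$ that \emph{some} offspring of an iteration has exactly $j$ zeros is the same in every iteration. Writing $T_j$ for the first iteration in which a point with $j$ zeros is produced, $T_j$ is geometrically distributed with parameter $p_j$, so $\Pr[\max_j T_j > t] \le \sum_j (1-p_j)^t \le (n+1)\,\eulerE^{-p_{\min} t}$ with $p_{\min} \coloneqq \min_j p_j$. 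Splitting $\E[\max_j T_j] = \sum_{t \ge 0}\Pr[\max_j T_j > t]$ at $t_0 = \ln(n+1)/p_{\min}$ gives $\E[\max_j T_j] = O(\log n / p_{\min})$ iterations, i.e.\ $O((N+1)\log n / p_{\min})$ function evaluations. Everything thus reduces to a uniform lower bound on $p_{\min}$.

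The heart of the argument is to show $p_j = \Omega(N\,n^{-\beta})$ for every $j$. By the $0$/$1$-symmetry of the setting it suffices to treat $j \le n/2$, where \Cref{lem:trans_prob} applies to each source at a level $u \le n/4$: such a source creates level $j$ with probability $q_{u,j} = \Omega\bigl((j-u)^{-\beta}\bigr)$, and, since the $N+1$ mutations of an iteration are independent, $p_j \ge \tfrac12\min\{1, \sum_u q_{u,j}\}$. It therefore remains to sum the single-source probabilities over the equidistant valid sources at levels $u \in \{0, \tfrac nN, 2\tfrac nN, \dots\}\cap[0, n/4]$, obtaining $\sum_u q_{u,j} = \Omega\bigl(\sum_{m \ge 0}(d_0 + m\tfrac nN)^{-\beta}\bigr)$, where $d_0 \in [1, n/4]$ is the distance from $j$ to the nearest admissible source. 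For $d_0 \le n/N$ the first summand alone gives $\Omega\bigl((n/N)^{-\beta}\bigr) = \Omega(N^{\beta} n^{-\beta}) = \Omega(N n^{-\beta})$, using $\beta \ge 1$ and $N \ge 1$. For larger $d_0$ one bounds the $\Theta(d_0 N/n)$ summands with $m\tfrac nN \le d_0$ from below by $\Omega\bigl(d_0^{-\beta}\bigr)$ each, which yields $\Omega\bigl(\tfrac Nn\, d_0^{\,1-\beta}\bigr) = \Omega(N n^{-\beta})$ since $1-\beta < 0$ and $d_0 \le n/4$. Either way $p_j = \Omega(N n^{-\beta})$.

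Plugging $p_{\min} = \Omega(N n^{-\beta})$ into the first step yields an expected number of $O(n^{\beta}\log n / N)$ iterations and hence $O(n^{\beta}\log n)$ function evaluations, independent of~$N$: intuitively, enlarging~$N$ multiplies both the number of parallel admissible sources (speeding up progress) and the per-iteration cost, and these two effects cancel. The boundary levels are handled directly: $j \in \{0, n\}$ are occupied by the $g$-optima $x_N$ and $x_0$ from the start, and the central level $j = n/2$ (which needs creating only when $N$ is odd) is reached from the lower-quarter sources exactly as the other middle levels, after relaxing the bound $v \le n/2 - 1$ in \Cref{lem:trans_prob} by one, which affects only constants.

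The main obstacle I expect is precisely the uniform bound $p_j = \Omega(N n^{-\beta})$ for the levels close to $n/2$. Because \Cref{lem:trans_prob} forces every usable source into the lower quarter $[0, n/4]$, each such level sits at distance up to $\approx n/4$ from \emph{every} admissible source, so no single source contributes more than $\Theta(n^{-\beta})$; the bound can only be recovered by carefully summing the contributions of the $\Theta(N)$ sources, and making this geometric-type sum of $(d_0 + m\tfrac nN)^{-\beta}$ land at $\Omega(N n^{-\beta})$ uniformly in $d_0$ is the one genuinely quantitative step. The remaining ingredients — fixedness of the $g$-optima, independence across iterations and across sources, and the max-of-geometrics estimate — are routine.
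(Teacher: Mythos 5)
Your proposal is correct and takes essentially the same approach as the paper: your uniform per-iteration bound $p_j = \Omega(N n^{-\beta})$, obtained by summing the \Cref{lem:trans_prob} contributions of the $\Theta(N)$ $g$-optima at levels in $[0, n/4]$, is precisely the paper's \Cref{lem:gen_iter}, and your max-of-geometrics estimate mirrors the paper's geometric tail bound plus union bound. The remaining differences are cosmetic only: you organize the source sum by distance to the nearest admissible source rather than by index, you replace the cited tail bound (\Cref{thm:geometric-tail-bound}) by a direct elementary computation, and you are more explicit about the edge cases (the level $j = n/2$ excluded by \Cref{lem:trans_prob} and the boundary levels $j \in \{0, n\}$) that the paper's proof glosses over.
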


% The goal of this section is to prove the following theorem

% The two terms in the runtime bound of \Cref{thm:overall-run-time-power-law} resemble the expected runtime of the two phases explained above.
% As the result for the first phase follows from Li et~al.~\cite[Proposition~$4$]{LiZZZ16}, we analyze the second phase in the following.
% We note that our analyses are not specific to~$\omm$ but to \emph{any} bi-objective function whose $g$-optima have the values $\{(n - i \frac{n}{N}, i \frac{n}{N})\}_{i \in [0 .. N]}$.
% We say that such a function has \emph{evenly spread $g$-optima}.
%
% The formula comes from the fact that in a sense, our analysis of \moead is separated in two phases: the first one is about optimizing each $g_i$ in parallel, and this takes $n\log n$ iterations (so $n N \log n$ function evaluations): this was shown in previous papers (TOOD: GIVE REF), then in the second phase, \moead fills in the gaps in the Pareto front that are not covered by the $x_i$s. This takes $n^{\beta} \log n / N$ iterations so $n^\beta \log n$ function evaluations.

The bound of \Cref{thm:phase-two} does not depend on~$N$, in contrast to the bound on the first phase (\Cref{thm:first-phase}).
The reason for our bound not depending on~$N$ is roughly that the effort to fill in a gap between to $g$-optima is inversely proportional to the cost of an iteration, namely, $N + 1$.
Thus, a smaller value of~$N$ leads to faster iterations but more iterations spend to fill the gaps, and vice versa.
% This suggests (keeping in mind though that the bounds do not need to be tight) that smaller values of~$N$ are more beneficial, as the power-law mutation is able to quickly find all solutions that are not find within the first phase.
% This is in strong contrast to using standard bit mutation, where a small value of~$N$ increases the number of optima that could be missed within the first phase, leading thereafter to an expected runtime at least exponential in $n/N$.
% This shows a clear advantage of the power-law mutation, which we analyze empirically in more detail in \Cref{sec:experiments}.

Our (omitted) proof of \Cref{thm:phase-two} makes use of the following lemma, which bounds the probability to create a specific individual from any of the $g$-optima in a single iteration of the \moead.
% Using that lemma, we can prove that one iteration of the algorithm (which amounts to $N+1$ function evaluations) can generate a target individual with reasonable probability compared to the gap size between two individuals of the population.

\begin{lemma}\label{lem:gen_iter}
    Consider a specific iteration during the optimization of the \moead of a bi-objective function with evenly spread $g$-optima, using the parame\-trization specified at the beginning of \Cref{sec:theory}.
    Moreover, assume that the algorithm is initialized with $\{i \cdot \frac{n}{N}\}_{i \in [0 .. N]}$.
    Last, let $y \in \{0, 1\}^n$ be such that $u \coloneqq |y|_0 \in [0 .. \frac{n}{2}]$.
    Then the probability that~$y$ is produced during mutation in this iteration is $\Omega\left(N(n^{-\beta})\right)$.
    % If the population consists of indivuals $x_i$ with $||x_i||_0 = i\cdot n /N$ for $0\le i \le N$, then if $u \le n/2$, during one iteration of the algorithm, one of the $x_i$'s will produce a bitstring with $u$ 0s with probability
    % $$\Omega\left(N(n^{-\beta})\right)$$
\end{lemma}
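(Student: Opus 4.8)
The plan is to show that, within a single iteration, the $N+1$ mutually independent mutations of the current solutions already produce an individual with exactly $u$ zeros with probability $\Omega(Nn^{-\beta})$, which is the content of the lemma (note that, as with \Cref{lem:trans_prob}, what is controlled is the number of zeros of the offspring). The starting point is \Cref{lem:trans_prob}: for any parent with zero-count $u' \in [0 .. n/4]$ and any target count $v$ with $u' < v \le n/2 - 1$, the offspring has $v$ zeros with probability $\Omega((v-u')^{-\beta})$, a bound depending only on the gap $v - u'$. Since the current solutions are evenly spread, their zero-counts are exactly the multiples of $n/N$ in $[0..n]$, so for a target $u$ I may read off a per-subproblem success probability from \Cref{lem:trans_prob} for every $g$-optimum whose zero-count $u'$ lies in $[0..n/4]$ and is below $u$. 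Writing $E_j$ for the event that the mutation of the $j$-th such $g$-optimum yields an individual with $u$ zeros, the $E_j$ are independent across subproblems, so by inclusion--exclusion (Bonferroni) $\Pr[\bigcup_j E_j] \ge \sum_j \Pr[E_j] - \sum_{i<j}\Pr[E_i]\Pr[E_j] \ge \tfrac12 \sum_j \Pr[E_j]$ whenever $\sum_j \Pr[E_j] \le 1$, a condition I will verify since the whole sum is $O(1)$. It therefore suffices to lower-bound $\sum_j (u - u'_j)^{-\beta}$, where the $u'_j$ are the admissible zero-counts.

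The heart of the proof is this sum estimate, and it splits into two regimes. For small targets $u \in [1 .. n/4]$, the admissible $g$-optimum nearest below $u$ has zero-distance at most $n/N$ (and at least $1$), so a single term already gives $\Omega((n/N)^{-\beta}) = \Omega(N^\beta n^{-\beta})$; as $\beta > 1$ and $N \le n$, this is $\Omega(Nn^{-\beta})$ and we are done. The interesting regime is $u \in (n/4 .. n/2 - 1]$: here \Cref{lem:trans_prob} forbids parents with more than $n/4$ zeros, so the only usable $g$-optima have zero-count in $[0..n/4]$ and hence lie at distance $\Theta(n)$ from $u$. This is exactly why no single subproblem suffices and why the factor $N$ arises. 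Restricting to the $g$-optima with zero-count in $[0 .. n/8]$ — of which there are at least $N/8$ — every gap $u - u'_j$ lies in $(n/8 , n/2]$, so each corresponding term is $\Omega(n^{-\beta})$; summing the $\Omega(N)$ of them yields $\sum_j \Pr[E_j] = \Omega(Nn^{-\beta})$, as required.

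Two boundary points remain. For $u = 0$ the target all-ones string is itself a $g$-optimum and is already present, so nothing must be created. For $u = n/2$, which \Cref{lem:trans_prob} excludes through its constraint $v \le n/2 - 1$, I will invoke the $0 \leftrightarrow 1$ symmetry of the parametrization: flipping all bits maps the evenly spread configuration to an evenly spread configuration and fixes the count $n/2$, so the symmetric counterpart of \Cref{lem:trans_prob} — reaching $n/2$ zeros by flipping $0$s to $1$s starting from the $g$-optima with at least $3n/4$ zeros — delivers the same $\Omega(Nn^{-\beta})$ bound.

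The main obstacle I anticipate is precisely the regime $u \in (n/4 .. n/2]$: one must recognize that the $n/4$-restriction in \Cref{lem:trans_prob} disqualifies the naturally nearby $g$-optima and forces an aggregation over $\Omega(N)$ far-away parents, and then execute the sum estimate $\sum_j (u - u'_j)^{-\beta} = \Omega(Nn^{-\beta})$ carefully enough that the $\Omega(N)$ terms, each of order $n^{-\beta}$, genuinely survive. By comparison, the independence-based lower bound on the union is routine, although I will still confirm $\sum_j \Pr[E_j] = O(1)$ (using $\sum_{k\ge1}(k\,n/N)^{-\beta} = O(N^\beta n^{-\beta}) = O(1)$) so that the Bonferroni step is valid.
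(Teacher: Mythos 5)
Your proposal is correct in substance and follows essentially the same route as the paper's proof: apply \Cref{lem:trans_prob} to the $g$-optima with zero-count at most $n/4$ lying below the target count~$u$, and exploit the independence of the $N+1$ mutations in one iteration to turn the sum of per-parent bounds into a bound on the union. Your two-regime split is only a reorganization of the paper's single computation: the paper takes the gap index~$i$ with $ni/N < u \le (i+1)n/N$, uses $\Omega\bigl(\min(i+1, N/4)\bigr)$ parents, each succeeding with probability $\Omega\bigl(((i+1)n/N)^{-\beta}\bigr)$, and concludes via $(i+1)^{1-\beta}N^{\beta} \ge N$; your case $u \le n/4$ (one parent at distance at most $n/N$) and your case $u > n/4$ ($\Omega(N)$ parents at distance $\Theta(n)$) are exactly the two ends of that estimate. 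Two caveats. First, your Bonferroni step requires the sum of the \emph{actual} probabilities $\Pr[E_j]$ to be at most~$1$, while \Cref{lem:trans_prob} supplies only lower bounds, so your verification (which plugs in $(kn/N)^{-\beta}$ as if it were the true magnitude) does not certify the hypothesis; the clean fix is to use independence directly, $\Pr[\bigcup_j E_j] \ge 1 - \prod_j \bigl(1 - \Pr[E_j]\bigr) \ge 1 - \exp\bigl(-\sum_j \ell_j\bigr) = \Omega\bigl(\sum_j \ell_j\bigr)$, where the $\ell_j$ are the \Cref{lem:trans_prob} lower bounds, whose sum your computation does correctly show to be $O(1)$. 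Second, your symmetry patch for $u = n/2$ is circular: the global bit-flip maps the evenly spread parents to themselves and fixes the count $n/2$, so the mirrored version of \Cref{lem:trans_prob} excludes the count $n/2$ just as the original does; this boundary value (like $u = 0$, where the claimed $\Omega(Nn^{-\beta})$ bound is actually unattainable since a specific string must be hit) is also not covered by the paper's own proof, which silently applies \Cref{lem:trans_prob} outside its stated range there, so you are no worse off, but the patch as written does not work.
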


\begin{proof}
    Let $i$ be such that $ni/N < u\le n(i+1)/N$. Clearly, $i\le N/2$, as $u\le n/2$. Note that are at least $i/4$ values of $j\in [0..N/4]$ such that $jn/N\le n/4$. By \Cref{lem:trans_prob}, each individual $x_j$ mutates into an individual with $u$ 0-bits with probability $\Omega\bigl(\bigl((i + 1)n/N\bigr)^{-\beta}\bigr)$.
    Because there are $\Omega(i)$ such possible mutations during an iteration, the probability of generating at least one during an iteration is $\Omega\left( i^{1 - \beta} n^{-\beta} \cdot N^{\beta} \right) = \Omega(N \bigl(\frac{N}{i}\bigr)^{\beta - 1}n^{-\beta}) = \Omega\left(N n^{-\beta}\right)$, concluding the proof.
    \qed
\end{proof}

\section{Conclusion}
\label{sec:conclusion}

We studied the impact of the decomposition number~$N$ of the \moead~\cite{ZhangL07} on the classic multi-objective benchmark \oneminmax ($\omm$)~\cite{GielL10} theoretically.
Our analyses considered subproblems that are evenly spread out on the Pareto front of~$\omm$.
Especially, we studied the expected runtime when starting with all optima of the subproblems (the \emph{$g$-optima}) and requiring to find the remaining Pareto optima of~$\omm$.

One of our theoretical results (\Cref{thm:phase-two}) shows that using power-law mutation allows the \moead to efficiently find the entire Pareto front of~$\omm$ even if it is initialized only with the $g$-optima.
Interestingly, this bound is independent of the number of problems~$N$ and thus the number of initially missing Pareto optima between two neighboring $g$-optima.
Together with our general bound for finding all $g$-optima (\Cref{thm:first-phase}), this shows that the \moead with power-law mutation always optimizes~$\omm$ efficiently (\Cref{thm:overall-run-time-power-law}).
Depending on~$N$, our total-runtime bound ranges from $O(n^2 \log n)$ in the worst case to $O(n^\beta \log n)$ in the best case of $N = O(n^{\beta - 1})$.
% Parts of this result even generalize to other multi-objective functions where the optima of the subproblems coincide with the ones of~$\omm$ (\Cref{thm:phase-two}).
This suggests that the \moead is, in fact, slowed down when using many subproblems, despite large values of~$N$ implying that the subproblems cover the Pareto front of~$\omm$ better than smaller values.
The reason is that a large value of~$N$ roughly translates to optimizing the same problem~$N$ times.
With the power-law mutation, it is better to optimize fewer and therefore more diverse subproblems and to then find the remaining Pareto optima efficiently via the power-law mutation.

For standard bit mutation, when starting with all $g$-optima, we show (\Cref{thm:second-phase-standard-bit}) that the \moead is not capable of efficiently finding all Pareto optima if~$N$ is sufficiently small, as our lower bound is super-polynomial for $N = o(n)$.
% For a part of the optimization process, we prove a lower bound (\Cref{thm:second-phase-standard-bit}) that shows when and how quickly the runtime deteriorates with~$N$.
Nonetheless, for $N = \Theta(n)$, the expected runtime of the \moead with standard bit mutation is polynomial in this part.
This translates to an overall polynomial expected runtime (\Cref{thm:overall-run-time-standard-bit}) for $N = \Theta(n)$ that it is even $O(n^2 \log n)$ for $N \in [\frac{n}{2} .. n]$, matching our worst-case bound with power-law.
% In this case, the optimization hinges on the fact that the \moead finds several Pareto-optima already while optimizing the subproblems.
% However, our empirical results show\todo{Provide a reference.} that the algorithm, indeed, misses some solutions for larger problem sizes, which it cannot find afterward within reasonable time.
% In contrast, and in line with our theoretical results, our empirical results for power-law mutation show that a small number~$N$ of subproblems results indeed in smaller runtimes.
% In fact, the empirical runtimes for finding remaining solutions is even smaller than our theoretical upper bound (\Cref{thm:phase-two}), suggesting an even stronger, more beneficial impact of the power-law mutation.

Overall, our results suggest a clear benefit of power-law mutation over standard bit mutation of the \moead in the setting we considered.
Not only is the power-law variant faster, it is also far more robust to the choice of~$N$ and thus to how the problem is decomposed.

For future work, it would be interesting to improve the upper bounds or prove matching lower bounds.
A similar direction is to consider an exchange of best-so-far solutions among the subproblems.
The classic \moead supports such an exchange, which could potentially lead to finding the $g$-optima more quickly.
Another promising direction is the study of different problem decompositions, for example, not-evenly spread subproblems or subproblem definitions different from \cref{eq:moead-subproblem}.
Last, we considered the~$\omm$ setting, with a stronger generalization some of our results (\Cref{thm:second-phase-standard-bit,thm:phase-two}).
However, it is not clear to what extent the benefit of power-law mutation carries over to problems with an entirely different structure, such as \textsc{LeadingOnesTrailingZeros}.

\bibliographystyle{splncs04}
\bibliography{ich_master,alles_ea_master,rest}

\begin{thebibliography}{10}
\providecommand{\url}[1]{\texttt{#1}}
\providecommand{\urlprefix}{URL }
\providecommand{\doi}[1]{https://doi.org/#1}

\bibitem{BianQ22}
Bian, C., Qian, C.: Better running time of the non-dominated sorting genetic
  algorithm~{II} {(NSGA-II)} by using stochastic tournament selection. In:
  Parallel Problem Solving From Nature, PPSN 2022. pp. 428--441. Springer
  (2022)

\bibitem{CerfDHKW23}
Cerf, S., Doerr, B., Hebras, B., Kahane, J., Wietheger, S.: The first proven
  performance guarantees for the {N}on-{D}ominated {S}orting {G}enetic
  {A}lgorithm {II} ({NSGA-II}) on a combinatorial optimization problem. In:
  International Joint Conference on Artificial Intelligence, {IJCAI} 2023. pp.
  5522--5530. ijcai.org (2023)

\bibitem{DangOSS23gecco}
Dang, D.C., Opris, A., Salehi, B., Sudholt, D.: Analysing the robustness of
  {NSGA-II} under noise. In: Genetic and Evolutionary Computation Conference,
  GECCO 2023. pp. 642--651. {ACM} (2023)

\bibitem{DangOSS23aaai}
Dang, D.C., Opris, A., Salehi, B., Sudholt, D.: A proof that using crossover
  can guarantee exponential speed-ups in evolutionary multi-objective
  optimisation. In: Conference on Artificial Intelligence, {AAAI} 2023. pp.
  12390--12398. {AAAI} Press (2023)

\bibitem{DoNNS23MOEADforCombinatorics}
Do, A.V., Neumann, A., Neumann, F., Sutton, A.M.: Rigorous runtime analysis of
  moea/d for solving multi-objective minimum weight base problems. In: Advances
  in Neural Information Processing Systems. pp. 36434--36448. Curran Associates
  (2023),
  \url{https://proceedings.neurips.cc/paper_files/paper/2023/file/72416ded78a439907ff72165ac9c56e0-Paper-Conference.pdf}

\bibitem{DoerrG13algo}
Doerr, B., Goldberg, L.A.: Adaptive drift analysis. Algorithmica  \textbf{65},
  224--250 (2013)

\bibitem{DoerrJW12algo}
Doerr, B., Johannsen, D., Winzen, C.: Multiplicative drift analysis.
  Algorithmica  \textbf{64},  673--697 (2012)

\bibitem{DoerrLMN17}
Doerr, B., Le, H.P., Makhmara, R., Nguyen, T.D.: Fast genetic algorithms. In:
  Genetic and Evolutionary Computation Conference, GECCO 2017. pp. 777--784.
  {ACM} (2017)

\bibitem{DoerrQ23tec}
Doerr, B., Qu, Z.: A first runtime analysis of the {NSGA-II} on a multimodal
  problem. IEEE Transactions on Evolutionary Computation  \textbf{27},
  1288--1297 (2023)

\bibitem{DoerrQ23LB}
Doerr, B., Qu, Z.: From understanding the population dynamics of the {NSGA-II}
  to the first proven lower bounds. In: Conference on Artificial Intelligence,
  {AAAI} 2023. pp. 12408--12416. {AAAI} Press (2023)

\bibitem{DoerrQ23crossover}
Doerr, B., Qu, Z.: Runtime analysis for the {NSGA-II:} provable speed-ups from
  crossover. In: Conference on Artificial Intelligence, {AAAI} 2023. pp.
  12399--12407. {AAAI} Press (2023)

\bibitem{Giel03}
Giel, O.: Expected runtimes of a simple multi-objective evolutionary algorithm.
  In: Congress on Evolutionary Computation, {CEC} 2003. pp. 1918--1925. {IEEE}
  (2003)

\bibitem{GielL10}
Giel, O., Lehre, P.K.: On the effect of populations in evolutionary
  multi-objective optimisation. Evolutionary Computation  \textbf{18},
  335--356 (2010)

\bibitem{HuangZ20}
Huang, Z., Zhou, Y.: Runtime analysis of somatic contiguous hypermutation
  operators in {MOEA/D} framework. In: Conference on Artificial Intelligence,
  {AAAI} 2020. pp. 2359--2366. {AAAI Press} (2020)

\bibitem{HuangZCHLX21}
Huang, Z., Zhou, Y., Chen, Z., He, X., Lai, X., Xia, X.: Running time analysis
  of {MOEA/D} on pseudo-{B}oolean functions. {IEEE} Transactions on Cybernetics
   \textbf{51},  5130--5141 (2021)

\bibitem{Janson17GeometricTailBound}
Janson, S.: Tail bounds for sums of geometric and exponential variables. CoRR
  \textbf{abs/1709.08157} (2017). \doi{10.48550/arXiv.1709.08157}

\bibitem{LaumannsTZ04}
Laumanns, M., Thiele, L., Zitzler, E.: Running time analysis of multiobjective
  evolutionary algorithms on pseudo-{B}oolean functions. IEEE Transactions on
  Evolutionary Computation  \textbf{8},  170--182 (2004)

\bibitem{Lengler17}
Lengler, J.: Drift analysis. CoRR  \textbf{abs/1712.00964} (2017)

\bibitem{LiZZZ16}
Li, Y.L., Zhou, Y.R., Zhan, Z.H., Zhang, J.: A primary theoretical study on
  decomposition-based multiobjective evolutionary algorithms. IEEE Transactions
  on Evolutionary Computation  \textbf{20},  563--576 (2016)

\bibitem{OprisDNS24}
Opris, A., Dang, D.C., Neumann, F., Sudholt, D.: Runtime analyses of {NSGA-III}
  on many-objective problems. In: Genetic and Evolutionary Computation
  Conference, GECCO 2024. {ACM} (2024), to appear

\bibitem{Rudolph98ep}
Rudolph, G.: Evolutionary search for minimal elements in partially ordered
  finite sets. In: Evolutionary Programming, EP 1998. pp. 345--353. Springer
  (1998)

\bibitem{Thierens03}
Thierens, D.: Convergence time analysis for the multi-objective counting ones
  problem. In: Evolutionary Multi-Criterion Optimization, {EMO} 2003. pp.
  355--364. Springer (2003)

\bibitem{WiethegerD23}
Wietheger, S., Doerr, B.: A mathematical runtime analysis of the
  {N}on-dominated {S}orting {G}enetic {A}lgorithm {III} ({NSGA-III}). In:
  International Joint Conference on Artificial Intelligence, {IJCAI} 2023. pp.
  5657--5665. ijcai.org (2023)

\bibitem{ZhangX17MOSurvey}
Zhang, J., Xing, L.: A survey of multiobjective evolutionary algorithms. In:
  International Conference on Computational Science and Engineering (CSE). pp.
  93--100. IEEE (2017). \doi{10.1109/CSE-EUC.2017.27}

\bibitem{ZhangL07}
Zhang, Q., Li, H.: {MOEA/D}: A multiobjective evolutionary algorithm based on
  decomposition. IEEE Transactions on Evolutionary Computation  \textbf{11},
  712--731 (2007)

\bibitem{ZhengD22gecco}
Zheng, W., Doerr, B.: Better approximation guarantees for the {NSGA-II} by
  using the current crowding distance. In: Genetic and Evolutionary Computation
  Conference, GECCO 2022. pp. 611--619. {ACM} (2022)

\bibitem{ZhengD23aij}
Zheng, W., Doerr, B.: Mathematical runtime analysis for the non-dominated
  sorting genetic algorithm {II} ({NSGA-II}). Artificial Intelligence
  \textbf{325},  104016 (2023)

\bibitem{ZhengD23many}
Zheng, W., Doerr, B.: Runtime analysis for the {NSGA-II}: proving, quantifying,
  and explaining the inefficiency for many objectives. IEEE Transactions on
  Evolutionary Computation  (2023), in press,
  \url{https://doi.org/10.1109/TEVC.2023.3320278}

\bibitem{ZhengD24}
Zheng, W., Doerr, B.: Runtime analysis of the {SMS-EMOA} for many-objective
  optimization. In: Conference on Artificial Intelligence, {AAAI} 2024. {AAAI}
  Press (2024)

\bibitem{ZhengLDD24}
Zheng, W., Li, M., Deng, R., Doerr, B.: How to use the metropolis algorithm for
  multi-objective optimization? In: Conference on Artificial Intelligence,
  {AAAI} 2024. {AAAI} Press (2024)

\bibitem{ZhengLD22}
Zheng, W., Liu, Y., Doerr, B.: A first mathematical runtime analysis of the
  {N}on-{D}ominated {S}orting {G}enetic {A}lgorithm~{II} ({NSGA-II}). In:
  Conference on Artificial Intelligence, {AAAI} 2022. pp. 10408--10416. {AAAI}
  Press (2022)

\bibitem{ZhouQLZSZ11}
Zhou, A., Qu, B.Y., Li, H., Zhao, S.Z., Suganthan, P.N., Zhang, Q.:
  Multiobjective evolutionary algorithms: A survey of the state of the art.
  Swarm and Evolutionary Computation  \textbf{1},  32--49 (2011)

\end{thebibliography}

% \bibitem[1]{ZZZ}
% Y. -L. Li, Y. -R. Zhou, Z. -H. Zhan and J. Zhang, "A Primary Theoretical Study on Decomposition-Based Multiobjective Evolutionary Algorithms," in IEEE Transactions on Evolutionary Computation, vol. 20, no. 4, pp. 563-576, Aug. 2016, doi: 10.1109/TEVC.2015.2501315. keywords: {Runtime;Optimization;Evolutionary computation;Algorithm design and analysis;Complexity theory;Sociology;Statistics;Decomposition-based multiobjective evolutionary algorithms (MOEAs);runtime analysis;theoretical study},

\newpage
\appendix
\section{Appendix of Paper 252---Proven Runtime Guarantees for How the \moead Computes the Pareto Front From the Subproblem Solutions}
This appendix contains all of the proofs that are omitted in the main paper, due to space restrictions.
It is meant to be read at the reviewer's discretion only.

\subsection*{Proofs of \Cref{sec:moead}}

The proof of \Cref{lem:trans_prob} makes use of the anti-concentration of the hypergeometric distribution (\Cref{lem:hypergeo}), which is why we defer the proof of \Cref{lem:trans_prob} to the end of this section.

In order to prove \Cref{lem:hypergeo}, we show that the ratio of neighboring probabilities of a hypergeometrically distributed random variable is close to~$1$.
% To prove this, we will first prove that the probability distribution doesn't fluctuate too much around the mean, which is shown in the following lemma
\begin{lemma}
\label{lem:ratio_bound}
    Let $n \in \N$, $k \in [0 .. \frac{n}{2}]$, and $r \in [0 .. \min(k, \frac{3}{4}n)]$, and let $C \in  \R_{> 1}$ be a constant.
    % There exists a universal constant $C > 1$ and a universal integer $p$ such that for all $H \sim H(n, k, r)$ with $k < n/2$ and $r \le 3n/4$, for all $i\in (-2\sqrt{\Var[H]}..2\sqrt{\Var[H]})$, one has
    Moreover, let $H \sim \mathrm{Hyp}(n, k, r)$, $i \in [-2\sqrt{\Var[H]} - 1 .. 2\sqrt{\Var[H]} + 1]$, and let $\delta_i = \Pr[H = \E[H] + i + 1] / \Pr[H = \E[H] + i]$.
    Then
    \begin{align*}
        (1 -7/\sqrt{\Var[H]}) \le \delta_i \le (1 + 7/\sqrt{\Var[H]}) .
    \end{align*}
    % Where
    % $$\delta_i := \frac{\Pr[H = \frac{rk}{n} + i+1]}{\Pr[H = \frac{rk}{n} + i]}$$
\end{lemma}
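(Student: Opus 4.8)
The plan is to reduce everything to the elementary formula for the ratio of two consecutive hypergeometric probabilities and then exploit a cancellation that forces the deviation from~$1$ to be small. First I would record that for an integer~$m$ in the support of~$H$,
$$\frac{\Pr[H = m+1]}{\Pr[H = m]} = \frac{\binom{k}{m+1}\binom{n-k}{r-m-1}}{\binom{k}{m}\binom{n-k}{r-m}} = \frac{(k-m)(r-m)}{(m+1)(n-k-r+m+1)},$$
which follows from the single-step ratios $\binom{k}{m+1}/\binom{k}{m} = (k-m)/(m+1)$ and $\binom{n-k}{r-m-1}/\binom{n-k}{r-m} = (r-m)/(n-k-r+m+1)$. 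Substituting $m = \E[H] + i = \tfrac{rk}{n} + i$ and abbreviating $A = \tfrac{k}{n}(n-r)$, $B = r(1 - \tfrac{k}{n})$, $C = \tfrac{rk}{n}$, and $D = (1 - \tfrac{k}{n})(n-r)$, this ratio is exactly $\delta_i = \frac{(A-i)(B-i)}{(C+i+1)(D+i+1)}$.

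The heart of the argument is the identity $AB = CD = \tfrac{k}{n}(1 - \tfrac{k}{n})\,r\,(n-r) = \Var[H]\,(n-1)$, together with $A+B+C+D = n$ (since $A+C = k$ and $B+D = n-k$). Expanding the numerator and the denominator of~$\delta_i$, the quadratic terms in~$i$ cancel precisely because $AB = CD$, so that their difference collapses to the purely \emph{linear} expression $-(n+2)\,i - (C+D) - 1$. Consequently $\delta_i - 1$ equals this quantity divided by $(C+i+1)(D+i+1)$. Using the range $|i| \le 2\sqrt{\Var[H]} + 1$ and the bound $0 \le C+D \le n$, the numerator has absolute value at most $(n+2)(2\sqrt{\Var[H]}+1) + n + 1 = O\bigl(n\sqrt{\Var[H]}\bigr)$.

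It then remains to lower-bound the denominator $(C+i+1)(D+i+1)$ by $\Omega(CD) = \Omega\bigl(\Var[H]\,(n-1)\bigr)$. For this I would use that the mean lies well inside the support: the hypotheses $k \le \tfrac{n}{2}$ and $r \le \min(k, \tfrac34 n)$ yield $\Var[H] \le C$ and $\Var[H] \le D$, so a displacement of $\pm 2\sqrt{\Var[H]}$ moves each factor away from $C$ resp.\ $D$ by at most $2\sqrt{C}+1$ resp.\ $2\sqrt{D}+1$. Once $\Var[H]$ exceeds a suitable constant (forcing $C,D$ to be large as well), this is at most a constant fraction of the respective factor, so both factors remain $\Omega(C)$ and $\Omega(D)$ and the denominator is $\Omega(CD)$. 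Dividing the $O\bigl(n\sqrt{\Var[H]}\bigr)$ numerator bound by this gives $|\delta_i - 1| = O(1/\sqrt{\Var[H]})$, and carefully tracking the constants tightens the factor to~$7$.

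The step I expect to be the main obstacle is precisely this denominator estimate rather than the (purely algebraic) cancellation: I must guarantee that two standard deviations around the mean do not leave the bulk of the support, so that neither $C+i+1$ nor $D+i+1$ degenerates. This control is weakest when $\Var[H]$ is only a constant, but there the target interval $[1 - 7/\sqrt{\Var[H]},\,1 + 7/\sqrt{\Var[H]}]$ is so wide (its lower endpoint being nonpositive) that the claim holds with ample slack. I would therefore treat that regime separately and reserve the quantitative computation above for $\Var[H]$ larger than a fixed constant.
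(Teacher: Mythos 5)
Your core computation is correct, and it takes a genuinely different route from the paper's proof. Both arguments start from the same one-step ratio formula for consecutive hypergeometric probabilities, but the paper then rewrites $\delta_i$ as a quotient of four factors and bounds each factor separately by $1 \pm O(1/\sqrt{\Var[H]})$ (using $k \le n/2$, $r \le \frac{3}{4}n$, and $rk/n \ge \Var[H]$), multiplying the four estimates at the end. Your route instead exploits the exact cancellation $AB = CD = \Var[H](n-1)$ together with $A+B+C+D = n$, so that the difference between numerator and denominator of $\delta_i$ collapses to the linear expression $-(n+2)i - (C+D) - 1$; this isolates in one identity why $|\delta_i - 1| = O(n\sqrt{\Var[H]})\,/\,\Omega(n\Var[H]) = O(1/\sqrt{\Var[H]})$, and it is arguably cleaner and more transparent than the factor-by-factor estimates. (A side remark: your signs are the correct ones. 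The paper's displayed ratio has $k - r\frac{k}{n} + i$ in the first numerator factor where it should be $k - r\frac{k}{n} - i$, a slip its subsequent manipulations inherit.) In the regime where $\Var[H]$ exceeds a sufficiently large constant, your outline does yield the claimed bound, with essentially the constants you indicate.

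The genuine gap is your dismissal of the small-variance regime, and it cannot be repaired as stated. The ``ample slack'' observation disposes only of the \emph{lower} bound: for $\sqrt{\Var[H]} \le 7$ the left endpoint $1 - 7/\sqrt{\Var[H]}$ is nonpositive and $\delta_i \ge 0$ trivially. Nothing analogous holds for the upper bound; in fact, for constant variance the upper bound is false under the stated hypotheses. Take $n = 1000$, $k = 250$, $r = 16$ (so $r \le \min(k, \frac{3}{4}n)$). Then $\E[H] = 4$ and $\Var[H] = 3\cdot\frac{984}{999} \approx 2.95$, so $i = -4$ lies in the admissible window $[-2\sqrt{\Var[H]}-1 \,..\, 2\sqrt{\Var[H]}+1]$, yet
\[
\delta_{-4} \;=\; \frac{\Pr[H = 1]}{\Pr[H = 0]} \;=\; \frac{kr}{n - k - r + 1} \;=\; \frac{4000}{735} \;\approx\; 5.44
\;>\; 5.08 \;\approx\; 1 + \frac{7}{\sqrt{\Var[H]}},
\]
and the discrepancy persists for all large $n$ with $k = n/4$, $r = 16$ (the two sides tend to $16/3 \approx 5.33$ and $1 + 7/\sqrt{3} \approx 5.04$, respectively). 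The window can even leave the support entirely (for $k = n/2$, $r = 2$, the index $i = -2$ gives $\Pr[H = \E[H]+i] = \Pr[H = -1] = 0$, so $\delta_{-2}$ is not even finite). So your dichotomy is the right structure, but its second half is unachievable: the lemma needs the additional hypothesis that $\Var[H]$ exceeds a suitable constant (the statement's unused constant $C \in \R_{>1}$ looks like a vestige of exactly such a hypothesis), under which your main computation is a complete proof. To be fair, the paper's own proof is silently confined to that same regime---its $1 \pm c/\sigma$ factor bounds are vacuous or sign-ambiguous for constant $\sigma$, and it even only uses $|i| \le \sigma$ although the lemma allows $|i| \le 2\sigma + 1$---but your proposal explicitly rests the remaining case on a claim that is false for the upper bound, and that step must be replaced by a strengthened hypothesis rather than by a proof.
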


\begin{proof}[of \Cref{lem:ratio_bound}]
Take $H$ and $i$ as in the statement of the lemma, and let $\sigma \sqrt{\Var[H]}$. Notice that $\sigma^2 = r \frac{k}{n} \left(1 - \frac{k}{n}\right) \frac{n - r}{n-1}$ so that $\sigma^2 \le \frac{rk}{n}$.

Suppose that $i\ge 0$. We lower-bound $\delta_i$ by an expression of the desired form

\begin{align*}
    \delta_i &=  \frac{\left(k - r\frac{k}{n} + i\right)\left(r - \frac{rk}{n} - i\right)}{\left(\frac{rk}{n} + i + 1\right)\left(n - k - r + \frac{rk}{n} + i + 1\right)}.\\
    \intertext{In this expression, factorize by $k$ in the top left expression, by $r$ in the top right expression, by $\frac{rk}{n}$ in the bottom left expression and for the bottom right one, notice that $n - k - r + \frac{rk}{n} = \frac{1}{n}(n^2 -nk - nr + rk) = \frac{1}{n}(n - k)(n - r)$ to obtain}
    \delta_i&= \frac{k\left(1 - \frac{r}{n} + \frac{i}{k}\right)r\left(1 - \frac{k}{n} - \frac{i}{r}\right)}{\frac{rk}{n}\left( 1 + \frac{i+1}{\frac{rk}{n}} \right)\left(\frac{1}{n}(n - k)(n - r) + i + 1  \right)}.\\
    \intertext{Here, multiply the top two expressions by $n$ using the $\frac{1}{n}$ terms in the denominator and obtain}
    \delta_i &= \frac{\left(n - r + \frac{ni}{k}\right)\left(n - k - \frac{ni}{r}\right)}{\left(1 + \frac{i+1}{\frac{rk}{n}}\right)\big( (n - k)(n - r) + n(i + 1)  \big)}.\\
    \intertext{Dividing the numerator and denominator by $(n-k)(n-r)$ and distributing the factors properly yields}
    \delta_i &= \frac{\left(1 + \frac{ni}{k(n-r)}\right)\left(1 - \frac{ni}{r(n-k)}\right)}{\left(1 + \frac{i+1}{\frac{rk}{n}}\right)\left( 1 + n\frac{i+1}{(n-k)(n-r)} \right)}.\\
    \intertext{Now remember that $k\le n/2$ which implies $n - k \ge k$. Use that in the top right expression to get }
    \delta_i &\ge \frac{\left(1 + \frac{ni}{k(n-r)}\right)\left(1 - \frac{ni}{rk}\right)}{\left(1 + \frac{i+1}{\frac{rk}{n}}\right)\left( 1 + n\frac{i+1}{k(n-r)} \right)}.
    \intertext{The top right term is at least $1 - 1/\sigma$ from $i\le \sigma$ and $rk/n\ge \sigma^2$ so that}
    \delta_i &\ge \frac{\left(1 + \frac{ni}{k(n-r)}\right)\left(1 - \frac{1}{\sigma}\right)}{\left(1 + \frac{i+1}{\frac{rk}{n}}\right)\left( 1 + n\frac{i+1}{k(n-r)} \right)}.\\
    \intertext{Similarly, the bottom left term is at most $(1 + 1/\sigma)$ so that}
    \delta_i &\ge \frac{\left(1 + \frac{ni}{k(n-r)}\right)\left(1 - \frac{1}{\sigma}\right)}{\left(1 + 1/\sigma\right)\left( 1 + n\frac{i+1}{k(n-r)} \right)}.\\
    \intertext{Now use the fact that $\frac{1 - 1/\sigma}{1 + 1/\sigma} = \frac{\sigma-1}{\sigma+1} \ge 1 - 3 / \sigma$ to get}
    \delta_i &\ge (1 - 3/\sigma)\frac{\left(1 + \frac{ni}{k(n-r)}\right)}{\left( 1 + n\frac{i+1}{k(n-r)} \right)}\\
    &= \left(1 - \frac{3}{\sigma}\right) \frac{1 + \frac{ni}{k(n-r)}}{1 + \frac{ni}{k(n-r)} +\frac{n}{k(n-r)}}.
    \intertext{Divide the denominator by the numerator to obtain}
    \delta_i &\ge \left(1 - \frac{3}{\sigma}\right) \frac{1}{1 + \frac{n}{k(n-r)} \frac{1}{1 + \frac{ni}{k(n-r)}}}\\
    &=  \left(1 - \frac{3}{\sigma}\right) \frac{1}{1 + \frac{n}{k(n-r) + ni}}.
    \intertext{Because $r\le 3n/4$, it is true that $n-r\ge r/3$ and because $i\ge 0$, we obtain}
    \delta_i &\ge \left(1 - \frac{3}{\sigma}\right) \frac{1}{1 + \frac{3n}{kr}}\\
    &\ge \left(1 - \frac{3}{\sigma}\right) \frac{1}{1 + 3/\sigma}\\
    &\ge \left(1 - \frac{3}{\sigma}\right)^2\\
    &\ge \left(1 - \frac{6}{\sigma}\right),
\end{align*}
which is what we wanted.

For the upper bound, we directly upper-bound $\delta_i$ in the following way, starting like we did for the lower-bound, that is,
\begin{align*}
    \delta_i &=  \frac{\left(k - r\frac{k}{n} + i\right)\left(r - \frac{rk}{n} - i\right)}{\left(\frac{rk}{n} + i + 1\right)\left(n - k - r + \frac{rk}{n} + i + 1\right)}\\
     &= \frac{\left(1 + \frac{ni}{k(n-r)}\right)\left(1 - \frac{ni}{r(n-k)}\right)}{\left(1 + \frac{i+1}{\frac{rk}{n}}\right)\left( 1 + n\frac{i+1}{(n-k)(n-r)} \right)}.
     \intertext{Now upper-bound the top right term by 1 and lower-bound the terms in the denominator by 1 to obtain}
     \delta_i &\le \left(1 + \frac{ni}{k(n-r)}\right).
     \intertext{Because $r\le 3/4n$, $n-r \ge r/3$ so that}
     \delta_i &\le \left(1 + 3\frac{ni}{kr}\right)\\
              &\le \left(1 + \frac{6}{\sigma}\right).
\end{align*}
The same calculations work for negative $i$ as well, which concludes the proof.
\qed
\end{proof}

\subsubsection*{Proof of \Cref{lem:hypergeo}}
% Now back to the proof of ~\cref{lem:hypergeo}
\begin{proof}[of \Cref{lem:hypergeo}]
Recall that $\mu \coloneqq \E[H] = r \frac{k}{n}$, and that $\sigma^2 \coloneqq \Var[H] = r\frac{k}{n}\left(1 - \frac{k}{n}\right)\frac{n - r}{n-1}$.

From the expression of $\sigma^2$, it is clear that $\sigma^2\le \frac{rk}{n}$. Because $r\le 3n/4$ and $k\le n/2$, it follows that $\sigma^2 \ge \frac{rk}{8n}$, so that $\sigma^2$ is of the order of $rk/n$.

We aim to show two properties: The first is that $H$ is concentrated around $\mu$, the second is that the probability distribution does not decrease too quickly for values larger than~$\mu$. The first property follows from Chebyshev's inequality, as
$$ \Pr\left[\left|H - \mu\right|\le 2\sigma\right]\ge 3/4.$$
The second property follows from \Cref{lem:ratio_bound}, by which we have that
$$\left(1 - 7 / \mu\right) \le \delta_i \le \left(1 + 7 / \mu\right),$$
where $\delta_i = \frac{\Pr[H = \mu + i+1]}{\Pr[H = \mu + i]}$.

Now notice that if $\mu - 2\sigma \le u < v \le \mu + 2\sigma$, one has
$$\Pr[H = v] = \Pr[H = u] \cdot \prod_{i=u}^{v-1} \delta_i$$
Hence,
$$ (1 - 7/\sigma)^{v - u} \cdot \Pr[H = u] \le \Pr[H = v] \le (1 + 7/\sigma)^{v - u}\Pr[H = u].$$
Because $v - u\le 4\sigma$, we obtain
$$e^{-28}/28\Pr[H = u] \le \Pr[H = v] \le e^{28} \Pr[H = u].$$
And because $\Pr[H \in [\mu - 2\sigma, \mu +  2\sigma]] \ge 3/4$, we have
$$\max_{i \in [-2\sigma.. 2\sigma]} \Pr[H = \mu + i] \ge \frac{3}{16\sigma}.$$
Thus, owing to to the comparison between any pair of probabilities in that interval that we just showed, we have that for all $i\in [-2\sigma..2\sigma]$ that
$$\Pr[H = \mu + i] \ge \frac{\eulerE^{-28}}{28} \frac{3}{16\sigma},$$
which concludes the proof.
\qed
\end{proof}

\subsubsection*{Proof of \Cref{lem:trans_prob}}

\begin{proof}[of \Cref{lem:trans_prob}]
    Let $\Delta$ be the number of zeros that are flipped when going from $u$ to $v$ zeros. Then $v - u + \Delta$ 1s have to be flipped. This happens with probability
    $$ \left(2\Delta + v - u \right)^{-\beta} \frac{\binom{u}{\Delta}\binom{n - u}{v - u + \Delta}}{\binom{n}{2 \cdot \Delta + v - u}} \eqqcolon \left(2\Delta + v - u\right)^{-\beta} p_{\Delta} .$$
    The fraction $p_{\Delta}$ corresponds to the probability of getting $\Delta$ successes in a hypergeometric distribution
    $H_\Delta\sim \mathcal{H}(n, u, 2\Delta + v - u)$.

    Notice that $\E[H_\Delta] = (2\Delta + v - u) \frac{u}{n}$.
    Hence, $\E[H_\Delta] - \Delta = (v-u)\frac{u}{n} - (1 - 2u/n)\Delta$.
    If $\Delta = \frac{1}{1 - 2u/n}\frac{(v-u)u}{n} \eqqcolon \Delta_0$ then $\E[H_\Delta] - \Delta=0$ .

    We now bound $\Delta_0$.
    To this end, let $s = n/2 - u$ and $t = n/2 - v$. Since $s-t\le s$ and $n/2 - s\le n/2$, we have
    $$\Delta_0 = \frac{(s - t)(n/2 - s)}{2s}\le n/4.$$
    Let $\sigma = \sqrt{\Var[H_{\Delta_0}]}$. Notice that
    $$ \sigma^2 = (2\Delta_0 + v-u) \frac{u}{n}\left(1 - \frac{u}{n}\right) \frac{n - 2\Delta_0 - (v-u)}{n - 1}.$$
    Thus, it follows that $\sigma \le \sqrt{2\Delta_0+v-u} \le \sqrt{n}$.
    Hence, we obtain
\begin{align*}
    \left|\E[H_{\Delta_0 + i}] - (\Delta_0 + i)\right| &= \left|\E[H_{\Delta_0}] - \Delta_0 + 2iu/n - i\right|\\
    &= \left|(1 - 2u/n)i\right|\\
    &\le |i|.
\end{align*}
    Consequently, if $0\le i \le \alpha\sigma$, we have $\Delta_0 + i \le (1/4 + \alpha) n$. For the remainder, we suppose that $\alpha \le 1/2$. We get $\Delta_0 + i \le 3/4n$. By \Cref{lem:hypergeo}, with its constant~$\gamma$, the probability of getting from $u$ to $v$ with $\Delta_0 + i$ is at least
    $$ \left(2\Delta_0 + 2i + v - u\right)^{-\beta} \frac{\gamma}{\sqrt{\Var[H_{\Delta_0 + i}]}} .$$

    We now bound the number of bits to flips $2\Delta_0 + 2i + v-u$. First notice that $i\le \sigma \le \sqrt{v - u + \delta} \le v - u + \delta$ so that
\begin{align*}
    2\Delta_0 + 2i + v - u &\le 4\Delta_0 + 3(v - u)\\
    &= \frac{4(v - u) \frac{u}{n} + 3(1 - 2u/n)(v - u)}{1 - 2u/n}\\
    &= 4\frac{v-u}{1 - 2u/n}\left(\frac{u}{n} + 3 - 6\frac{u}{n}\right)\\
    &= 4(v-u) \frac{1}{1 - 2u/n}\left( 3 - 5\frac{u}{n}\right)\\
    &\le 12(v-u) \frac{1}{1 - 2u/n}.\\
    \intertext{Because $u\le n/4$, we have $2u/n \le 1/2$ so that}
    &\le 24(v-u).
\end{align*}
    The expression of $\Var[H_{\Delta_0 + i}]$ is
    $$ (2\Delta_0 + 2i + v-u) \frac{u}{n}\left(1 - \frac{u}{n}\right) \frac{n - 2\Delta_0 - 2i - (v - u)}{n}.$$
    Note that this expression is at most quadratic in~$i$ independently of $u,v,n,N$. This implies that the square root of this expression is at most linear. From this it follows that there exists a constant $\kappa > 0$ such that for $i\le \alpha \sigma$, we have $\sqrt{\Var[H_{\Delta_0 + i}]}\le \kappa \sigma$. It follows that the probability of getting from $u$ to $v$ with $\Delta_0 + i$ is at least
    $$ \frac{(8(v-u))^{-\beta}\gamma}{\kappa \sigma}.$$
    Using that there are at least $\alpha\sigma$ possible values of $i$ to use, we get that the probability to get from $u$ to $v$ is at least $(8(v-u))^{-\beta}\alpha\beta / \kappa$.
    \qed
\end{proof}

\subsection*{Proofs of \Cref{sec:theory}}

\subsubsection*{Proof of \Cref{thm:second-phase-standard-bit}}

\begin{proof}[of \Cref{thm:second-phase-standard-bit}]
    Let $b \coloneqq \frac{n}{2N}$ denote the Hamming distance maximum distance of a non-$g$-optimal individual to a $g$-optimum.
    We first prove the \textbf{upper bound.}
    For all $i \in [0 .. N - 1]$, we first bound with high probability the number of iterations~$T_1$, counting only the function evaluations of subproblem~$g_i$, until for all $j \in [b]$, the mutation with parent~$x_i$ created an individual with $i\frac{n}{N} + j$ $1$s.
    Taking the maximum of~$T_1$ among all values of~$i$ and multiplying the result by $N + 1$ results in an upper bound for covering the left side of all gaps between two $g$-optima.
    Afterward, we do the same for all $i' \in [N]$ for the number of iterations~$T_2$, counting only the function evaluations of subproblem~$g_{i'}$, until for all $j \in [b]$, the mutation with parent~$x_i$ created an individual $i\frac{n}{N} - j$ $1$s.
    Taking again the maximum over all values for~$i'$ and multiplying the result by $N + 1$ results in an upper bound for covering the right side of all gaps between two $g$-optima.

    We estimate the maximum over the respective runtimes in the same way as in the proof of \Cref{lem:finding-all-g-optima}, that is, we find a general upper bound $B \in \R_{\geq 0}$ that holds with probability at least $1 - q \in (0, 1]$ for all~$T_1$ for all choice of~$i$ (and analogous for~$T_2$).
    We do so via the multiplicative drift theorem (\Cref{thm:multiplicative-drift}).
    Afterward, we take a union bound over all~$N$ values of~$i$, showing that the maximum over all~$T_1$ is at most~$B$ with probability at least $1 - Nq$.
    Since the overall runtime for covering the left half of each gap is dominated by a geometric random variable with success probability $1 - Nq$, the expected runtime for this part is at most $\frac{1}{1 - Nq} B N$ function evaluations.
    The overall expected runtime of both parts is by linearity of expectation then the sum of two expected values of each part.

    We start with the \textbf{left half of each gap.}
    Let $i \in [0 .. N - 1]$, and let $j \in [b]$.
    In order to create an individual with $i\frac{n}{N} + j$ $1$s, it is sufficient to flip~$j$ specific~$0$s of~$x_i$ into~$0$s.
    Hence, the probability to create such an individual is at least $n^{-j} (1 - \frac{1}{n})^{n - j} \geq \eulerE^{-1} n^{-j} \geq \eulerE^{-1} n^{-b}$.
    Let $(X_t)_{t \in \N}$ be such that for all $t \in \N$ holds that~$X_t$ desired individuals have not been created yet (that is,~$X_t$ values of~$j$ are still missing).
    Note that $X_{T_1} = 0$ and that for all $t < T_1$ holds that $\E[X_t - X_{t + 1} \mid X_t] \geq \eulerE^{-1} n^{-b}$.
    Thus, by (\Cref{thm:multiplicative-drift}) follows that $\Pr[T_1 > \eulerE n^b \bigl(\ln(b) + 2\ln(N + 1)\bigr)] \leq (N + 1)^{-2}$.
    Hence, via a union bound over all~$N$ choices of~$i$, it follows that the maximum~$T_1$ is at most $\eulerE n^b \bigl(\ln(b) + 2\ln(N)\bigr)$ with probability at least $1 - (N + 1)^{-1} \geq \frac{1}{2}$.
    Hence, the overall expected runtime of this part is $O(n^b N \ln(bN))$.

    For the \textbf{right half of each gap,} the arguments are the same after restricting $i \in [N]$ and writing~$-j$ instead of~$+j$.

    Overall, noting that $bN \leq n$, the expected runtime for filling each half is thus $O(n^b N \log n)$, concluding the proof for the upper bound.

    For the \textbf{lower bound,} we only bound the expected time it takes to create an individual~$y$ with $|y|_1 = b$.
    To this end, we first bound the probability when choosing~$x_1$ as parent, denoted by~$q$.
    Recall that $|x_1|_1 = 2b$.
    In order to create~$y$, the mutation needs to flip at least~$b$ of the~$2b$ $1$s of~$x_1$.
    Hence, $q \leq n^{-b} \binom{2b}{b}$.
    Since it holds that $\binom{2b}{b} = O(2^{2b} / \sqrt{b})$, it follows that $q \leq n^{-b} \cdot 2^{2b} / \sqrt{b} \eqqcolon q'$.
    For the probability to create~$y$ when choosing~$x_0$ as parent, we also use~$q'$ as an upper bound.
    For all other choices of parents, the probability to create~$y$ is at most~$o(q')$, as at least $\frac{3}{2}b$ bits need to be flipped, which scales exponentially to the base~$n$ whereas the binomial coefficient involved only scales exponentially with base~$2$.
    Hence, over all, the probability~$p$ to create~$y$ within~$N$ function evaluations is at most $2q'\bigl(1 + o(1)\bigr)$.

    The event that~$y$ is created follows a geometric distribution with success probability~$p$.
    Due to $p \leq 2q'\bigl(1 + o(1)\bigr)$, it follows that the expected number of iterations to create~$y$ is $\Omega(\frac{1}{q'}) = \Omega(n^b \sqrt{b} 2^{-2b})$.
    Multiplying this expression by $N + 1$ in order to account for the $N + 1$ fitness evaluations each iteration concludes the proof.
    \qed
\end{proof}

\subsubsection*{Proof of \Cref{thm:phase-two}}

For the proof of \Cref{thm:phase-two}, we make use of the following theorem.

\begin{corollary}[{\cite[Corollary~$2.2$]{Janson17GeometricTailBound}}]
    \label{thm:geometric-tail-bound}
    Let $n \in \N_{\geq 1}$, let $(p_i)_{i \in [n]} \in (0, 1]^n$, and let $(X_i)_{i \in [n]}$ be independent and such that for all $i \in [n]$ holds that~$X_i$ follows a geometric distribution with success probability~$p_i$.
    Last, let $Y = \sum_{i \in [n]} X_i$.
    Then it holds for all $r \in \R_{\geq 1}$ that $\Pr[Y \geq r \E[Y]] \leq r \eulerE^{1 - r}$.
\end{corollary}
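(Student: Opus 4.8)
The plan is to prove this as a Chernoff-type (exponential Markov) tail bound: reduce the sum of independent geometrics to a single one-parameter optimization, and then simplify the resulting exponent using the elementary normalization $p_* \E[Y] \ge 1$, where $p_* \coloneqq \min_{i \in [n]} p_i$. Throughout I write $\mu \coloneqq \E[Y] = \sum_{i \in [n]} 1/p_i$, and I record at the outset the key fact driving the clean final form: since at least one summand of $\mu$ equals $1/p_*$, we have $\mu \ge 1/p_*$, that is, $p_* \mu \ge 1$.

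First I would set up the exponential moment method. For a parameter $\theta \in (0, -\ln(1 - p_*))$, the geometric moment generating function is $\E[\eulerE^{\theta X_i}] = p_i \eulerE^{\theta} / (1 - (1 - p_i)\eulerE^\theta)$, which, after the substitution $u \coloneqq 1 - \eulerE^{-\theta} \in [0, p_*)$, simplifies to $\E[\eulerE^{\theta X_i}] = (1 - u/p_i)^{-1}$. By independence and Markov's inequality applied to $\eulerE^{\theta Y}$,
\begin{align*}
    \Pr[Y \ge r\mu] \le \eulerE^{-\theta r \mu} \prod_{i \in [n]} \frac{1}{1 - u/p_i} ,
\end{align*}
so that, taking logarithms and using $\theta = -\ln(1 - u)$, the exponent becomes $E \coloneqq r\mu \ln(1 - u) - \sum_{i \in [n]} \ln(1 - u/p_i)$.

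The crux is to bound $E$ uniformly over the (possibly distinct) $p_i$ while using a single value of $u$. This is where I expect the main obstacle: each term $-\ln(1 - u/p_i)$ is minimized at its own value of $\theta$, so a term-by-term optimization is not available. My key step is a convexity (chord) bound. Since $x \mapsto -\ln(1 - x)$ is convex on $[0, u/p_*]$ and vanishes at $0$, for every $i$ the inequality $u/p_i \le u/p_*$ yields $-\ln(1 - u/p_i) \le \tfrac{u/p_i}{u/p_*}\bigl(-\ln(1 - u/p_*)\bigr) = \tfrac{p_*}{p_i}\bigl(-\ln(1 - u/p_*)\bigr)$. Summing linearizes the bound in $1/p_i$ and collapses the sum to $\mu$, giving $\sum_{i \in [n]} -\ln(1 - u/p_i) \le p_* \mu \bigl(-\ln(1 - u/p_*)\bigr)$. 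Combining this with $\ln(1 - u) \le -u$ for the first term and writing $v \coloneqq u/p_* \in [0,1)$, I obtain $E \le p_* \mu \bigl(-rv - \ln(1 - v)\bigr)$.

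Finally I would minimize $\phi(v) \coloneqq -rv - \ln(1 - v)$ over $v \in [0, 1)$: its derivative vanishes at $v = 1 - 1/r$, which lies in $[0,1)$ exactly because $r \ge 1$, and there $\phi(1 - 1/r) = -(r - 1 - \ln r)$. Hence $E \le -p_*\mu(r - 1 - \ln r)$, so $\Pr[Y \ge r\mu] \le \eulerE^{-p_*\mu(r - 1 - \ln r)}$. Since $p_*\mu \ge 1$ and $r - 1 - \ln r \ge 0$ for $r \ge 1$, replacing $p_*\mu$ by $1$ only weakens the exponent, giving $\Pr[Y \ge r\mu] \le \eulerE^{-(r - 1 - \ln r)} = r\,\eulerE^{1 - r}$, as claimed. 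The convexity step is what reconciles a shared $\theta$ with heterogeneous success probabilities and is the conceptual heart of the argument; the remaining pieces are the standard Chernoff setup and a routine one-variable calculus optimization.
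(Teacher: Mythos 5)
Your proof is correct. Note first that the paper itself gives no proof of this statement: it is imported verbatim as Corollary~2.2 from Janson's paper on tail bounds for sums of geometric and exponential variables, so there is no in-paper argument to compare against. What you have written is a valid, self-contained derivation, and it in fact reconstructs Janson's own route: your intermediate bound $\Pr[Y \geq r\E[Y]] \leq \eulerE^{-p_*\E[Y](r - 1 - \ln r)}$ is precisely Janson's Theorem~2.1, and your final step (weakening the exponent via $p_*\E[Y] \geq 1$, which holds because the summand attaining the minimum contributes $1/p_*$ to $\E[Y]$) is exactly how Janson deduces Corollary~2.2 from it. I checked the individual steps: the substitution $u = 1 - \eulerE^{-\theta}$ does give $\E[\eulerE^{\theta X_i}] = (1 - u/p_i)^{-1}$ on $u \in [0, p_*)$; the chord bound $f(\lambda X) \leq \lambda f(X)$ for the convex function $f\colon x \mapsto -\ln(1-x)$ with $f(0) = 0$, applied with $\lambda = p_*/p_i$, correctly collapses the heterogeneous product into $p_*\mu\bigl(-\ln(1 - u/p_*)\bigr)$ and is indeed the conceptual crux reconciling a single $\theta$ with distinct $p_i$; and the calculus step $v = 1 - 1/r$, which lies in $[0,1)$ exactly when $r \geq 1$ and keeps $u = p_* v$ strictly inside the domain of finiteness of the moment generating functions, is routine. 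One convention you should state explicitly rather than leave implicit: the geometric variables must be supported on $\{1, 2, \dots\}$ with $\E[X_i] = 1/p_i$ (as in Janson, and as the paper uses them for waiting times in the proof of the power-law phase-two theorem); under the $\{0, 1, \dots\}$ convention the moment generating function and the claimed bound both change. With that caveat made explicit, the argument is complete and even yields the stronger Theorem~2.1-type bound as a free by-product.
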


\begin{proof}[of \Cref{thm:phase-two}]
By symmetry, we just have to prove that we can get individuals with $u \le n/2$ bits, as the runtime to get individuals with $u \ge n/2$ 0-bits is the same when changing~$0$s and~$1$s. Thus, the runtime is at most twice that of the runtime needed to get all individuals with $u\le n/2$ $0$s.

Let $u\le n/2$. By \Cref{lem:gen_iter}, an iteration gives $u$ with probability $\Omega\left(N n^{-\beta}\right)$.
Hence, the expected number of iterations required to get $u$ is $O\left(n^{\beta} / N\right)$.
Multiplying by $N$, we get that the expected number of function evaluations to obtain $u$ is $O(n^{\beta})$.

For $u\in [0..n]$, let $X_u$ be the random variable giving the number of function evaluations before $u$ is obtained after the first phase of the analysis is finished. By \Cref{thm:geometric-tail-bound}, since~$X_u$ is a geometric variable, we see for all $\lambda \in \R_{\geq 1}$ that
$$\Pr[X_u \ge \lambda\log (n) n^{\beta}] = O\left(\frac{
\log n}{n^\lambda}\right).$$

We aim to show that $\E[\max(X_0,\ldots, X_n)] = O(n^{\beta}\log n)$. First notice that
$$\E[\max\nolimits_u(X_u)] \le \sum\nolimits_{u} \E[X_u] = O(n^{\beta + 1}\log n)$$
Taking $\lambda = \beta + 3$ in the previous tail bound and using a union bound on the events $\{X_u \ge \lambda \log( n) n^\beta\}$, we obtain
$$\Pr[\max\nolimits_u(X_u)\ge (\beta + 3)\log(n) n^{\beta}] = O\left(\frac{\log n}{n^{\beta + 3}}\right) = o\left(n^{-\beta - 2}\right).$$
Decomposing the random variable $\max_u(X_u)$ according to $\max_u(X_u)\le (\beta + 3)\log(n) n^{\beta}$ or the complement, we obtain
$$\E[\max\nolimits_u(X_u)] \le (\beta + 3)n^{\beta}\log n + o\left(n^{-\beta-2}\right)\E[\sum\nolimits_{u} X_u] = O(n^{\beta} \log(n)).$$
Hence, it follows that the average number of function evaluations to get the whole Pareto Front is $O\left(n^{\beta} \log n\right)$.
\qed
\end{proof}

\end{document}